\newtheorem{definition}{Definition}
\newtheorem{theorem}{Theorem}
\newtheorem{example}{Example}
\newtheorem*{theorem*}{Theorem}
\definecolor{orange}{rgb}{1,0.5,0}
\newcommand{\AnsProlog}[0]{\ensuremath{Ans\-Prolog} }
\newcommand{\AnsPrologs}[0]{\ensuremath{Ans\-Prolog}}
\newcommand{\shortasp}[1]{\mathtt{#1}}
\newcommand{\asp}[1]{\noindent\ensuremath{{\small{\shortasp{#1}}}}}
\newcommand{\notf}[1]{\ensuremath{\mathrm{not\ } #1}}
\journal{Engineering Applications of Artificial Intelligence}
\begin{document}

\begin{frontmatter}

\title{Practical Reasoning with Norms \\ for Autonomous Software Agents (Full Edition)}


\author[bath]{Zohreh Shams\corref{cor}\fnref{present}}
\ead{z.shams@bath.ac.uk}
\author[bath]{Marina De Vos}
\ead{m.d.vos@bath.ac.uk}
\author[bath]{Julian Padget}
\ead{j.a.padget@bath.ac.uk}
\author[abdn]{Wamberto~W.~Vasconcelos}
\ead{wvasconcelos@acm.org}

\fntext[present]{Present Address: Computer Laboratory, University of Cambridge, UK  (CB3 0FD)}
\cortext[cor]{Corresponding author.}

\address[bath]{Department of Computer Science, University of Bath, UK (BA2 6AH)}
\address[abdn]{Department of Computing Science, University of Aberdeen, UK (AB24 3UE)}

\begin{abstract}
Autonomous software agents operating in dynamic environments need to constantly reason about actions in pursuit of their goals, while taking into consideration norms which might be imposed on those actions. Normative practical reasoning supports agents making decisions about what is best for them to (not) do in a given situation. 
What makes practical reasoning challenging is  the interplay between goals that agents are pursuing and the norms that the agents are trying to uphold. We offer a formalisation to allow agents to plan for multiple goals and norms in the presence of \emph{durative} actions that can be executed \emph{concurrently}. We compare plans based on decision-theoretic notions (i.e. utility) such that the utility gain of goals and utility loss of norm violations are the basis for this comparison. The set of \emph{optimal\/} plans consists of plans that maximise the overall utility, each of which can be chosen by the agent to execute. We provide an implementation of our proposal in Answer Set Programming, thus allowing us to state the original problem in terms of a logic program that can be queried for solutions with specific properties.
The implementation is proven to be sound and complete. 
\end{abstract}

\begin{keyword}
Intelligent Agents, Practical Reasoning, Norms, Goals
\end{keyword}

\end{frontmatter}

\section{Introduction}\label{sec:intro}
Reasoning about what to do~-- known as practical reasoning --~for an agent pursuing different goals is a complicated task. When conducting practical reasoning, the agents might exhibit undesirable behaviour that was not predicted. 
The necessity of controlling undesirable behaviour has given rise to the concept of norms that offer a way to define ideal behaviour for autonomous software agents in open environments. Such norms often define obligations and prohibitions that express what the agent is obliged to do and what the agent is prohibited from doing.

Depending on their computational interpretation, norms can be regarded as \emph{soft\/} or 
\emph{hard constraints}. When modelled as hard constraints, the agent subject to the norms is said to be \emph{regimented}, in which case the agent has no choice but blindly to follow the norms \citep{Esteva2001}. Although regimentation guarantees norm compliance, it greatly restricts agent autonomy. Moreover, having individual goals to pursue, self-interested agents might not want to or might not be able to comply with the norms imposed on them. Conversely, \emph{enforcement\/} approaches, in which norms are modelled as soft constraints, leave the choice  of complying with or violating the norms to the agent. However, in order to encourage norm compliance, there are consequences associated, namely a punishment when agents violate a norm \citep{Lopez2005,Pitt2013} or a reward when agents comply with a norm \citep{Aldewereld2006}. In some approaches  (\emph{e.g.}, \citep{Aldewereld2006,Alrawagfeh2014,Wamberto2011}) there is a utility gain/loss associated with respecting norm or not, whereas in the \emph{pressured 
norm compliance\/} approaches (\emph{e.g.}, \citep{Lopez2005}), the choice to violate a norm or not is determined by how the norm affects the satisfaction or hindrance of the agent's goals. 


Existing work (e.g. \citep{Wamberto2011, Sofia2012, Criado2010, Meneguzzi2015}) on normative practical reasoning using enforcement either consider plan generation or
plan selection where there is a set of pre-generated plans available to the agent.
In these works, the attitude agents have toward norms is often one of compliance, meaning that their plans are often selected or, in some approaches, customised, to ensure norm compliance (e.g., \cite{Kollingbaum2005,Alechina2012,Wamberto2011}). We argue that in certain situations, an agent might be better off violating a norm which, if followed, would make it impossible for the agent to achieve an important goal or complying with a more important norm.

In this paper we set out an approach for practical reasoning that considers norms in both plan generation and plan selection. We extend current work on normative plan generation such that the agent attempts to satisfy a set of potentially conflicting goals in the presence of norms, as opposed to conventional planning problems that generate plans for a single goal \citep{Wamberto2011,Sofia2012}. Such an extension is made on top of STRIPS \citep{Nilsson1971}, the most established planning domain language that lays the foundation of many automated planning languages. Additionally, since in reality the actions are often non-atomic, our model allows for planning with durative actions that can be executed concurrently. Through our practical reasoning process agents consider {\em all\/} plans (i.e., sequences of actions), including those leading to norm compliance and violation; each plan gets an associated overall utility for its sequence of actions, goals satisfied, and norms followed/violated, and agents can decide which of them to pursue by comparing the relative importance of goals and norms via their utilities. The plan an agent chooses to follow is not necessarily norm-compliant; however, our mechanism guarantees that the decision will maximise the overall plan utility, and this justifies the occasional violation of norms as the plan is followed. Both plan generation and plan selection mechanisms proposed in this paper are implemented using Answer Set Programming (ASP) \citep{Gelfond1988}.

ASP is a declarative programming paradigm using logic programs under Answer Set semantics. In this paradigm the user provides a description of a problem and ASP works out how to solve the problem by returning answer sets corresponding to problem solutions. The existence of efficient solvers to generate the answers to the problems provided has increased the use of ASP in different domains of autonomous agents and multi-agent systems such as planning \citep{Lifschitz2002} and normative reasoning \citep{padget2006, Panagiotidi2012}. Several action and planning languages such as event calculus \citep{Kowalski1986}, $\mathcal{A}$ (and its descendants $\mathcal{B}$ and $\mathcal{C}$ \citep{Gelfond1998}, Temporal Action Logics (TAL) \citep{Doherty1998}, have been implemented in ASP \citep{Lee2012,Lee2014}, indicating that ASP is appropriate for reasoning about actions. This provides motive and justification for an implementation of STRIPS \citep{Nilsson1971} that serves as the foundation of our model in ASP.

This paper is organised as follows. First we present a scenario in Section \ref{sec:scenario} which we use to illustrate the applicability of our approach. This is followed by the formal model and its semantics in Section \ref{model}. The computational implementation of the model is provided in Section \ref{sec:implementation}. After the discussion of related work in Section \ref{sec:related}, we conclude in Section \ref{sec:conclusion}.

\section{Illustrative Scenario}\label{sec:scenario}
To illustrate our approach and motivate the normative practical reasoning model in the next section, we consider a scenario in which a software agent acts as a supervisory system in a disaster recovery mission and supports human decision-making in response to an emergency. The software agent's responsibility is to provide humans with different courses of actions available and to help humans decide on which course of actions to follow. In our scenario, the agent is to plan for a group of human actors who are in charge of responding to an emergency caused by an earthquake. The agent monitors the current situation (e.g., contamination of water, detection of shocks, etc.) and devises potential plans to satisfy goals set by human actors. In our scenario we assume the following two goals:
\begin{compactenum} 
\item Running a hospital to help wounded people: this goal is fulfilled when medics are present to offer help and they have access to water and medicines. 
\item Organising a survivors' camp: this is fulfilled when the camp's area is secured and a shelter is built.
\end{compactenum}
We also assume the two following norms that the agent has to consider while devising plans to satisfy the goals above:
\begin{compactenum}
\item It is forbidden to built a shelter within 3 time units of detecting shocks.  The cost of violating this norm is 5 units.
\item It is obligatory to stop water distribution for 2 time units once poison is detected in the water. The cost of violating this norm is 10 units.
\end{compactenum}
The formulation of this scenario is provided in \ref{ForScen}.

\section{A Model for Normative  Practical Reasoning}\label{model}
In this research, we take STRIPS \citep{Nilsson1971} as the basis of our normative practical reasoning model. In STRIPS, a planning problem is defined in terms of an initial state, a goal state 
and a set of operators (e.g. actions). Each operator has a set of preconditions representing the circumstances/context in which the operator can be executed, and a set of 
postconditions that result from applying the operator. Any sequence of actions
that satisfies the goal is a solution to the planning problem. 
In order to capture the features of the normative practical reasoning problem that we 
are going to model, we extend the classical planning problem by: 
\begin{compactenum} [(i)] 
\item replacing atomic actions with \emph{durative} actions: 
often the \emph{nature\/} of the actions is non-atomic, which means that although executed atomically in a state, the system state in which they finish executing is not necessarily the same in which they started \citep{Nunes1997}. Refinement of 
atomic actions to durative actions reflects the real time that a machine takes 
to execute certain actions, which is also known as ``real-time duration" of 
actions \citep{Borger2003}. 
\item Allowing a set of potentially inconsistent goals instead of the conventional single goal: the issue of planning for multiple goals distributed among distinct agents is addressed in collaborative planning. We address this issue for a single agent when handling multiple conflicting goals. 
\item Factoring in norms: having made a case for the importance of norms in Section \ref{sec:intro}, we combine normative and practical reasoning. Just like goals, a set of norms is not necessarily consistent, making it potentially impossible for an agent to comply with all norms imposed on it. 
\end{compactenum}
A solution for a normative practical reasoning problem that features (i), (ii) and (iii) is any sequence of actions that satisfies at least one goal. The agent has the choice of violating or complying with norms triggered by execution of a sequence of actions, while satisfying its goals. However, there may be consequences either way that the agent has to foresee. 

We explain the syntax and semantics of the model in Sections~\ref{syntax}--\ref{semantics}. First, however, we present the architecture of our envisaged system in the next section.

\subsection{Architecture}\label{sec:architecture}
The architecture, depicted in Figure \ref{Arch}, shows how re-planning can be considered if a plan in progress is interrupted due to a change in circumstances. This change can be the result of a change in the environment or unexpected actions of other agents in the system. As is customary in multi-agent systems, the agent will regularly check the viability of its plan. The frequency depends on the type of system the agent is operating in, the agent's commitment to its intentions, and the impact of re-computation on the agent's overall performance. 

When an agent decides that re-planning is in order, it will take the state in which the plan is interrupted as the initial state for the new plan and its current goal set as the goals to plan towards. The current goal set does not have to be the same as the goal set the original plan was devised for. Goals can already be achieved in the interrupted plan, previous goals may no longer be relevant and others may have been added. Even if the goals remain the same, the resulting new optimal plan might change due to changes in the state of the system. Similarly, there might be new norms imposed on the agent that will have to be considered in replanning.

We cater for agents which need to create their own individual plans. However, in doing so, in multi-agent scenarios agents will inevitably interact and interfere with each other's plans. The overview of Figure \ref{Arch} will cater for this in two ways: \begin{inparaenum}[(i)] \item agents will notice the states being changed by other agents (a way of indirect communication) and \item the ``Observations'' will also contain interactions among agents (direct communication)\end{inparaenum}.  Although we do not explore these ideas in the present paper, we envisage that the outcome of the indirect and direct interactions among agents could be represented as norms, establishing, for instance, that a particular agent, in the current context (\emph{i.e.}, power or organisational relationships, description of capabilities and global goals, etc.), is forbidden or obliged to do something.

\begin{figure}[t]
\centering
\includegraphics[scale=0.5]{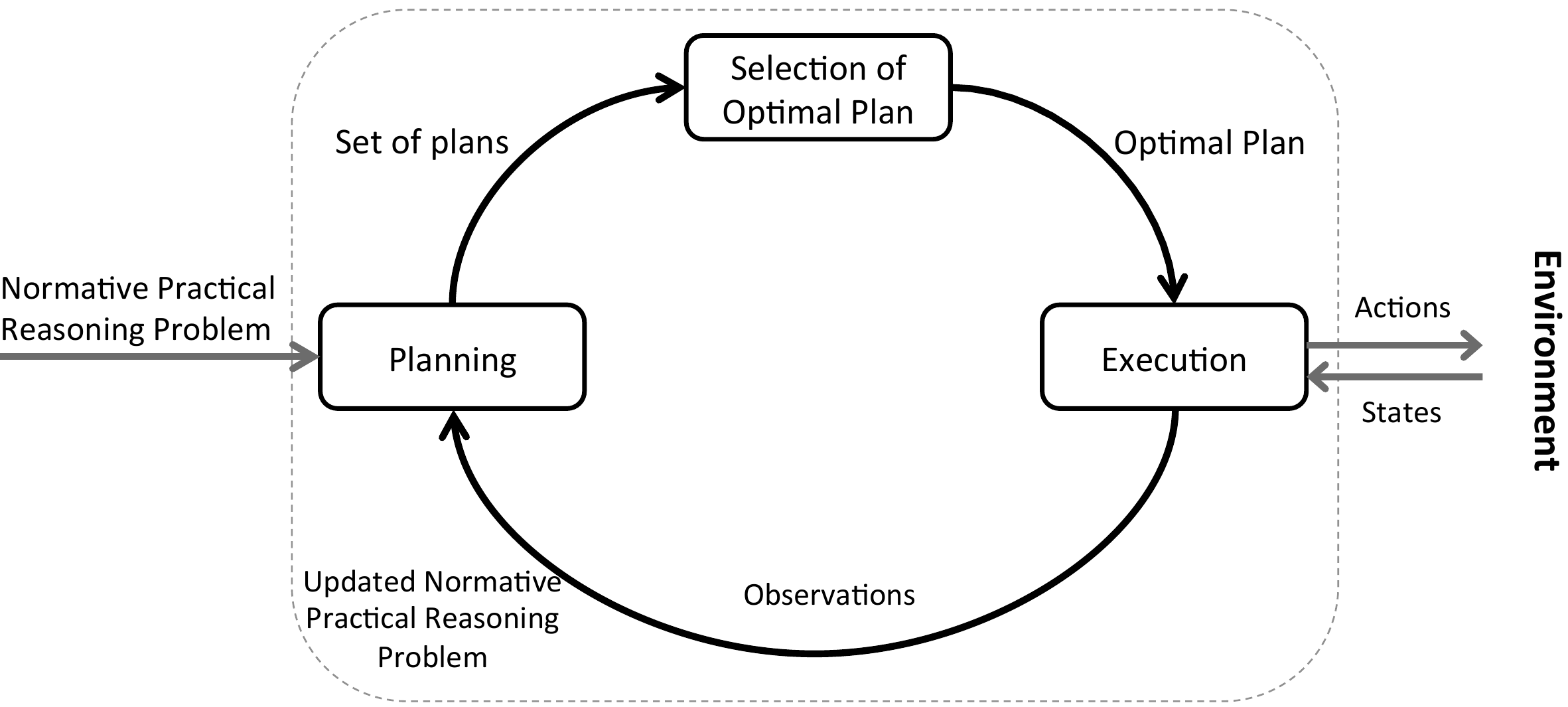}
\caption{Overview of the System Architecture}
\label{Arch}
\end{figure}

\subsection{Syntax}\label{syntax}
We start this section by describing an extended STRIPS planning problem, defined in  \citep{Shams2016},  that accommodates \begin{inparaenum}[(i)]
\item durative actions; \item multiple goals and \item multiple norms.
\end{inparaenum}

\begin{definition}[Normative Practical Reasoning Problem]\label{NPS}
A normative practical reasoning problem is a tuple $P = (\mathit{FL}, \Delta, A, G, N )$ where 
\begin{compactenum}[(i)]
  \item $F\!L$ is a set of fluents;
  \item $\Delta$ is the initial state; 
  \item $A$ is a finite, non-empty set of durative actions for the agent;
  \item $G$ is the agent's set of goals; 
  \item $N$ is a set of norms.
\end{compactenum}
\end{definition}
We describe in the ensuing sections each of the components of a normative practical reasoning problem. 

\subsubsection{Fluents and Initial State}

$F\!L$  is a set of domain fluents describing the domain the agent operates in. A literal $l$ is a fluent or its negation i.e. $l = \mathit{fl}$ or $l = \neg \mathit{fl}$ for some $\mathit{fl} \in \mathit{FL}$. For a set of 
literals $L$, we define $L^{+} = \{\mathit{fl}\,|\,\mathit{fl} \in L\}$ and $L^{-} = \{\mathit{fl}\,|\, \neg\mathit{fl} \in L\}$ to denote the set of positive and negative fluents in 
$L$ respectively. $L$ is well-defined if there exists no fluent $\mathit{fl}\in \mathit{FL}$ such that $\mathit{fl}, \neg \mathit{fl}\in L$, i.e., if $L^{+} \cap L^{-} = \emptyset$.    

The semantics of the normative practical reasoning problem is defined over a set of states $\Sigma$. A state $s \subseteq \mathit{FL}$ is determined by set of fluents that hold \emph{true\/} at a given time, while the other fluents (those that are not present) are considered to be false. A state $s \in \Sigma$ satisfies fluent $\mathit{fl} \in \mathit{FL}$, denoted $s \models \mathit{fl}$, if $\mathit{fl} \in s$. It satisfies its negation $s \models \neg \mathit{fl}$ if $\mathit{fl} \not\in s$. This notation can be extended to a set 
of literals as follows: set $X$ is satisfied in state $s$, $s \models X$,  when
$\forall x \in X, s \models x$.

The set of fluents that hold at the initial state is denoted by $\Delta \subseteq F\!L$.

\subsection{Durative Actions}\label{actions}

The component $A$ of our normative practical reasoning problem $P = (\mathit{FL}, \Delta, A, G, N )$ is a set of durative actions. A durative action has pre- and post-conditions. The effects of an action (as captured by its post-conditions) are not immediate, that is, it takes a non-zero period of time for the effects of an action to take place. 

\begin{definition}[Durative Actions]\label{acts}
A durative action $a$ is a tuple $\langle \mathit{pr}, \mathit{ps}, d \rangle$ where $\mathit{pr}$ and $\mathit{ps}$ are possibly empty and finite sets of well-defined literals representing respectively the pre- and post-conditions of the action, and $d\in\mathbb{N}, d> 0$, is the duration of the action. 
\end{definition} 
Given an action $a = \langle \mathit{pr}, \mathit{ps}, d \rangle$ we may also refer to its three components $\mathit{pr}(a), \mathit{ps}(a)$ and $d_a$. Moreover, we use $\mathit{pr}(a)^{+}$ and $\mathit{pr}(a)^{-}$ to refer to, respectively, the positive and negative literals in $\mathit{pr}(a)$; similarly, we have $\mathit{ps}(a)^{+}$ and $\mathit{ps}(a)^{-}$ to refer to respectively the positive and negative literals in $\mathit{ps}(a)$.

An action $a$ can be executed in a state $s$ if its preconditions hold in that state (i.e. $s \models pr(a)$). When modelling durative actions, there might be several states between the start and end state of the action, during which the action is said to be \emph{in progress}. Some approaches take the view that it is sufficient for the preconditions of the action to hold at the start state and it does not matter whether they hold while the action is in progress \citep{Knoblock1994}, whereas others hold the view that the preconditions of an action should be satisfied while the action is in progress \citep{Blum1997}. Moreover, some planning languages, such as Planning Domain Description Language (PDDL) \citep{Garrido2002,Fox2003},  distinguish between preconditions and those conditions that have to hold while the action is in progress. The latter conditions are referred to as \emph{invariant conditions}. Having invariant conditions different from preconditions undoubtedly brings more expressiveness to the planning language; however it comes at the price of higher implementation complexity. In this paper we take the position that the invariant conditions are the same as preconditions, which implies that the preconditions have to be preserved throughout the execution of the action.

The postconditions of a durative action cause changes to the state $s$ in which the action ends. These changes are: adding the positive postconditions $ps(a)^{+}$ to $s$ and deleting the negative postconditions  $ps(a)^{-}$ from $s$. Thus, for a state $s$ in which action $a$ ends, we have: $s \models ps(a)^+$ and $s \not\models ps(a)^-$.

\begin{example} \label{examact}
Action ``$\mathit{buildShelter}$" is available to the agents in the scenario described in Section~\ref{sec:scenario}. To build a shelter the agent has to secure and evacuate the area and there is no Shock detected -- these are represented as preconditions $\mathit{areaSecured}$,  $\mathit{evacuated}$ and $\neg \mathit{ShockDetected}$, respectively. Once the shelter is built the area does not have to remain evacuated. This is represented as the positive postcondition $\mathit{shelterBuilt}$, while the negative postcondition of this action is  $\neg \mathit{evacuated}$. In our scenario, we model this action as taking 4 units of time (that is, if it is executed in state $s_{k}$, it will end in state $s_{k+4}$,). 
\[
\mathit{buildShelter} = 
\left\langle
\left\{
\begin{array}{c}
\mathit{areaSecured}, \\
\mathit{evacuated}, \\
\neg \mathit{ShockDetected}
\end{array}
\right\},
\left\{
\begin{array}{c}
\mathit{shelterBuilt}, \\
\neg \mathit{evacuated} \\
\end{array}
\right\}, 4
\right\rangle
\]

\begin{figure}[!h]
  \centering
\tikzstyle{place}=[circle,draw, inner sep=0pt,minimum size=10mm]{
\begin{tikzpicture}
\node at  (2,-4)[place] [label={[align=center]below:$areaSecured$ \\ $evacuated$}] (3) {$s_{k}$} ;
\node at  (8,-4)[place] [label={[align=center]below:$shelterBuilt$}] (4) {$s_{k+4}$};
\draw[-latex, dashed]  (3) edge [above]node{$buildShelter$} (4);
\end{tikzpicture}}
\end{figure}
\end{example}

\subsection{Goals} \label{goal}
Goals identify the state of affairs in the world that an agent wants to achieve.
Different types of goals and their characteristics have been classified in the literature \citep{Riemsdijk2008}. 
\emph{Achievement\/} goals are the most common type of goals modelled in the agent literature and have therefore received the most attention \citep{Riemsdijk2008, Boer2002, Nigam2006, Riemsdijk2002}. goals for the purpose of this research are  achievement goals.

We define below the elements of the set $G$ of $P = (\mathit{F\!L}, \Delta, A, G, N )$. 

\begin{definition}[Goals]
A goal $g\in G$ is the pair $\langle r,v \rangle$, where $r$ is a possibly empty and finite set of well-defined literals representing the goal requirements, and $v\in\mathbb{N}, v> 0$, represents the utility/gain for achieving the goal.  
\end{definition}
Goal $g$'s requirements and value are denoted as $r(g)$ and $v(g)$, respectively.


\begin{example}\label{goals}
The goals from our illustrative scenario are formulated as below:
\[
\mathit{runningHospital} = 
\left\langle 
\left\{
\begin{array}{c}
\mathit{medicsPresent}, \\
\mathit{waterSupplied}, \\
\mathit{medicinesSupplied}
\end{array}
\right\},
25 
\right\rangle
\]
\[
\mathit{organiseSurvivorCamp} =
\left\langle 
\left\{
\begin{array}{c}
\mathit{areaSecured}, \\
\mathit{shelterBuilt}
\end{array}
\right\},
18
\right\rangle
\]
\end{example}
\subsection{Norms} \label{norm}

In this section we specify what we refer to as a norm in this work. 
In order to provide a context for the norm specification we explain how our norm specification corresponds to the five elements identified by \citet{Criado2012} that distinguish norm specification languages.

\begin{compactenum}
\item Deontic Operators: We model a permissive society in which the agent has complete knowledge of the domain of actions available. Everything is permitted unless it is explicitly prohibited. The role of obligation is to motivate the agent to execute a specific action and the role of prohibition is to inhibit the agent from executing a particular action. 

  \item Controls: Controls determine whether the deontic propositions operate on actions, states or both. In this work we  focus on action-based norms. 
  
\item Enforcement Mechanisms: We use the enforcement mechanism proposed by
\cite{Shams2015b} that is a combination of utility-based (e.g.,  \cite{Wamberto2011,Sofia2012}) and pressure-based \citep{Lopez2005} compliance methods.


\item Conditional Expressions: Similar to the control element, we use actions as conditional expressions. In other words, the norm condition is an action that once executed, the agent is obliged to or prohibited from executing the action that the norm targets. 

\item Temporal Constraints: temporal constraints can be used to express norm activation, termination, deadline, etc. The temporal constraint we specify here is concerned with the deadline. The agent is expected to comply with an obligation (execute a certain action) or a prohibition (refrain from executing a specific action) before some deadline. 
\end{compactenum} 

Having explained the properties of our norm specification language, we now define the element $N$ of problem $P = ( F\!L, \Delta, A, G, N )$. $N$ denotes a set of conditional norms to which the agent is subject:
\begin{definition}[Norms]\label{def:norms}
$N$ is a set of norms, each of which is a tuple of the form $n = \langle d\_o, a_{con}, a_{sub}, dl, c \rangle$, where 
\begin{compactitem}
\item $d\_o \in \{o,f\}$\footnote{The symbols $o$ and $f$ are normally represented as respectively $\mbox{\bf O}$ and $\mbox{\bf F}$ in the Deontic logic literature. However we have used lower case letters to make these consistent with our implementation in the next section. Capital letters in the implementation language are reserved for variables.} is the deontic operator determining the type of norm, which can be an obligation or a prohibition;
\item $a_{con} \in A$ is the durative action (cf. Def. \ref{acts}) that activates the norm;
\item $a_{sub} \in A$ is the durative action (cf. Def. \ref{acts}) that is the target of the obligation or prohibition; 
\item $dl \in \mathbb{N}$ is the norm deadline relative to the activation condition, which is the completion of the execution of the action $a_{con}$; and
\item $c \in \mathbb{N}$ is the penalty cost that will be applied if the norm is violated. $c(n)$ denotes the penalty cost of norm $n$.
 \end{compactitem}
\end{definition}
An obligation norm states that executing action $a_{con}$ obliges the agent to start/start and end the execution of $a_{sub}$ within $dl$ time units of the end of execution of $a_{con}$. Such an obligation is complied with if the agent starts or starts and ends executing $a_{sub}$ before the deadline and is violated otherwise. A prohibition norm expresses that executing action $a_{con}$ prohibits the agent from starting or starting and ending the execution of $a_{sub}$ within $dl$ time units of the end of execution of $a_{con}$. Such a prohibition is complied with if the agent does not start or does not start and end executing $a_{sub}$ before the deadline and is violated otherwise.

\begin{example}\label{normref}
The norms in the illustrative scenario are formulated as below:
\center
$n_{1} = \langle f, detectShock, buildShelter, 3, 5 \rangle$\\
$n_{2} = \langle o, detectPoison, stopWater,2, 10\rangle$
\end{example}

A norm can be activated multiple times in a sequence of action, generating different instances of the original norm. To make sure different instances are dealt with uniquely, we define \emph{instantiated norms}. In each instance the deadline is  updated relative to the end of execution of the action that is the condition of the norm. 

\begin{definition}[Instantiated Norm]\label{insno}
An instantiation of norm $n = \langle d\_o, a_{con},\allowbreak a_{sub}, dl, c \rangle$ is denoted as $n_{ins} = \langle d\_o, a_{con},a_{sub}, dl_{ins}, c \rangle$ where $dl_{ins}=dl + t_{a_{con}} + d_{a_{con}}$. $t_{a_{con}}$ is when action $a_{con}$ is executed and $d_{a_{con}}$ is the duration of $a_{con}$.
\end{definition}
We also denote an instantiation of a norm $n_i$ as $n_{i}'$.

\begin{example}
Assume that in some sequence of action $\mathit{detectShock}$ is executed at time $3$  (i.e. $t_{a_{con}}=3$) and that the duration of this action is $1$ (i.e. $d_{a_{con}}=1$). The instantiated version of norm 
\[
n_{1} = \langle f, \mathit{detectShock}, \mathit{buildShelter}, 3, 5 \rangle
\]
in this sequence of actions is 
\[
n'_{1} = \langle f, \mathit{detectShock}, \mathit{buildShelter}, 7,5\rangle
\]
Where $dl_{ins}$ is calculated based on Def.~\ref{insno}
\end{example}
\subsection{Semantics} \label{semantics}
Having explained the syntax of the model, we now focus on the semantics. To this end, we need to describe given a normative practical reasoning problem $P = (\mathit{FL}, \Delta, A, G, N )$: 
\begin{compactenum}[(i)] 
\item What the possible courses of action for the agent are and what 
properties each course of action has. Properties are defined in terms of goals that a sequence of action satisfies, norms it complies with and the norms it violates. This item is discussed in Section~\ref{propSeq}.
\item What different type of conflicts the agent can experience while trying to satisfy its goals and comply with the norms to which it is subject. In Section~\ref{conflict} we explore this item. 
\item What identifies a sequence of actions as a solution/plan for problem $P$. Plans are defined in Section~\ref{plan}.
\end{compactenum}

\subsubsection{Sequences of Actions and their Properties} \label{propSeq}

Let $P=(\mathit{FL},\Delta,A,G,N)$ be a normative practical reasoning problem. Also let $\pi =\langle (a_{0},0),\cdots,$ $(a_{n},t_{a_{n}}) \rangle$ with $a_{i} \in A$ and $t_{a_{i}} \in \mathbb{Z^{+}}$ be a sequence of actions $a_{i}$ executed at time $t_{a_{i}}$. The pair $(a_{i},t_{a_{i}})$ reads as action $a_{i}$ is executed at time $t_{a_{i}} \in \mathbb{Z^{+}} \mbox{ s.t. } \forall i<j, t_{a_{i}} < t_{a_{j}}$. The total duration of a sequence of actions, $\mathit{Makespan}(\pi)$, is defined in Equation \ref{makespan}. 
\begin{equation}\label{makespan}
 \mathit{Makespan}(\pi) = \mathit{max}(t_{a_{i}} + d_{a_{i}}) 
\end{equation}
Actions in a sequence can be executed concurrently but they cannot start at the same time. This is because the planning problem is defined for a single agent and a single agent is not typically assumed to be able to start two actions at the exact same instant. Also actions in the sequence should not have concurrency conflicts, which are defined below.
In our presentation we need to check for the occurrence of specific pairs in a sequence of actions $\pi$, and we thus define the operator ``$\widehat{\in}$'' as
\[
(a,t_{a})\;\widehat{\in}\;\pi\mbox{ iff } 
\left\{
\begin{array}{l}
\pi=\langle (a,t_{a}),\ldots,(a_n,t_n)\rangle \mbox{ or } \\
\pi=\langle (a_0,0),\ldots,(a,t_{a}),\ldots,(a_n,t_n)\rangle \mbox{ or } \\
\pi=\langle (a_0,0),\ldots,(a,t_{a})\rangle
\end{array}
\right.
\]

Temporary conflicts prevent the agent from executing two actions under specific constraints, the most common one of which is time. Conflicts caused by time, known as \emph{concurrency conflicts\/} between actions, prevent actions from being executed in an overlapping period of time. \citet{Blum1997} define that two actions $a_{i}$ and $a_{j}$ cannot be executed concurrently, if at least one of the following holds:
\begin{compactenum}
  \item The preconditions of $a_{i}$ and $a_{j}$  contradict each other:
        \[\exists r \in pr(a_{i}) \mbox{ s.t. } \neg r \in pr(a_{j}) \; \; \; or \]
        \[\exists \neg r \in pr(a_{i}) \mbox{ s.t. } r \in pr(a_{j})\]
  \item The postconditions of $a_{i}$ and $a_{j}$  contradict each other:
        \[\exists r \in ps(a_{i})^{+} \mbox{ s.t. } \neg r \in ps(a_{j})^{-} \; \; \; or \]
        \[\exists \neg r \in ps(a_{i})^{-} \mbox{ s.t. } r \in ps(a_{j})^{+}\]
        
  \item The postconditions of $a_{i}$  contradict the preconditions of 
  $a_{j}$:
         \[\exists r \in ps(a_{i})^{+} \mbox{ s.t. } \neg r \in pr(a_{j}) \; \; \; or \]
         \[\exists \neg r \in ps(a_{i})^{-} \mbox{ s.t. } r \in pr(a_{j})\]
         
  \item The preconditions of $a_{i}$ are contradicted by the postconditions of 
  $a_{j}$:
         \[\exists r \in pr(a_{i}) \mbox{ s.t. } \neg r \in ps(a_{j})^{-} \; \; \; or \]
         \[\exists \neg r \in pr(a_{i}) \mbox{ s.t. } r \in ps(a_{j})^{+}\]
\end{compactenum}
We summarise the four conditions above in Definition \ref{conConf}, where we define 
what are referred to as conflicting actions in the remainder of this work.
\begin{definition}[Conflicting Actions]\label{conConf}
Actions $a_{i}$ and $a_{j}$ have a concurrency conflict if the pre- or post-conditions of $a_{i}$ contradict the pre- or post-conditions of $a_{j}$. The set of conflicting actions is denoted as $\mathit{cf}_{\mathit{action}}$:
\begin{multline}
\mathit{cf}_{\mathit{action}}=
\left\{
(a_{i},a_{j})  
\left|
\begin{array}{l}
  \exists r \in pr(a_{i}) \cup  ps(a_{i})^{+}, \neg r \in pr(a_{j}) \cup 
  ps(a_{j})^{-} \\
  \mbox{ or } \\
  \exists \neg r \in pr(a_{i}) \cup  ps(a_{i})^{-}, r \in pr(a_{j}) \cup 
  ps(a_{j})^{+}
  \end{array}
  \right.
  \right\}
\end{multline}
\end{definition}

\begin{example} \label{examConfAct}
Assume action $evacuate$ with the following specification:
\[
\mathit{evacuate} = 
\left\langle
\left\{
\begin{array}{c}
\mathit{populated}, \\
\mathit{ShockDetected}, 
\end{array}
\right\},
\left\{
\begin{array}{c}
\mathit{evacuated}, \\
\neg \mathit{populated} \\
\end{array}
\right\}, 5
\right\rangle
\]
The pre- and post-conditions of this action are inconsistent with the pre- and post-conditions of action $\mathit{buildShelter}$ defined in Example~\ref{examact}:
\[
\mathit{buildShelter} = 
\left\langle
\left\{
\begin{array}{c}
\mathit{areaSecured}, \\
\mathit{evacuated}, \\
\neg \mathit{ShockDetected}
\end{array}
\right\},
\left\{
\begin{array}{c}
\mathit{shelterBuilt}, \\
\neg \mathit{evacuated} \\
\end{array}
\right\}, 4
\right\rangle
\]
Therefore, these two actions cannot be executed concurrently. However, action $\mathit{evacuate}$  effectively contributes to the preconditions of $\mathit{buildShelter}$, which means they can indeed be executed consecutively.
\end{example}

\begin{definition}[Sequence of States]
Let $\pi =\langle (a_{0},0),\cdots,$ $(a_{n},t_{a_{n}}) \rangle$ be a sequence of actions such that  $\nexists (a_{i},t_{a_{i}}), (a_{j},t_{a_{j}}) \in \pi \mbox{ s.t. } t_{a_{i}} \leq t_{a_{j}} < t_{a_{i}}+d_{a_{i}}, (a_{i}, a_{j}) \in \mathit{cf}_{action}$ and let $m= \mathit{Makespan}(\pi)$. The execution of a sequence of actions $\pi$ from a given starting state $s_{0}= \Delta$ brings about a sequence of states $S(\pi)=\langle s_{0}, \cdots s_{m}\rangle$ for every discrete time interval from $0$ to $m$.
\end{definition}

The transition relation between states is given by Def.~\ref{trans}. If  action $a_{i}$ ends at time $k$, state $s_{k}$ results from removing delete post-conditions and adding add post-conditions of action $a_{i}$ to state $s_{k-1}$. If there is no action ending at $s_{k}$, the state $s_{k}$ remains the same as $s_{k-1}$. We first define $A_{k}$ as the set of action/time pairs such that the actions end at some specific state $s_{k}$:
\begin{equation}
A_{k} = \{ (a_{i}, t_{a_{i}}) \in \pi \,|\, k=t_{a_{i}}+d_{a_{i}}\}
\end{equation}
Note that $s_{k}$ is always well-defined since two actions with inconsistent post-conditions, according to Def.~\ref{conConf} belong to $\mathit{cf}_{action}$ so they cannot be executed concurrently and thus they cannot end at the same state.

\begin{definition}[State Transition]\label{trans}
Let $\pi =\langle (a_{0},0),\cdots,$ $(a_{n},t_{a_{n}}) \rangle$ and let $S(\pi)=\langle s_{0}, \cdots s_{m}\rangle$ be the sequence of states brought about by $\pi$:
\begin{equation}\label{transition}
 \forall k > 0 :
   \; s_{k} = 
   \begin{cases}  
    (s_{k-1} \setminus (\bigcup\limits_{a \in A_{k}} ps(a)^{-}) \cup 
    \bigcup\limits_{a \in A_{k}} ps(a_{i})^{+} & A_{k} \neq  \emptyset\\
     s_{k-1} & A_{k} = \emptyset
   \end{cases}
\end{equation}
\end{definition}
We now turn our attention to the properties of each sequence of actions.

\begin{definition}[Goal Satisfaction]
Goal requirements should hold in order to satisfy the goal. A sequence of actions $\pi =\langle (a_{0},0),\cdots$ $(a_{n},t_{a_{n}}) \rangle$ satisfies goal $g$ if there is at least one state $s_k \in S(\pi)$ that satisfies the goal:
\begin{equation}
\pi \models r(g) \mbox{ iff } \exists \; s_{k} \in S(\pi)  \mbox{ s.t. } s_{k} \models r(g)
\end{equation}  
\end{definition}
The set of goals satisfied by $\pi$ is denoted as $G_{\pi}$:
\begin{equation} \label{G'}
  G_{\pi}=\{g \,|\, \pi \models r(g)\}
\end{equation}

\begin{definition}[Activated Norms]\label{activeNorm}
 A norm $n=\langle d_o, a_{con}, a_{sub},dl, c \rangle$ is instantiated in a sequence of actions $\pi =\langle (a_{0},0),\cdots,(a_{n},t_{a_{n}}) \rangle$ if its activation condition $a_{con}$ belongs to the sequence of actions. Let $N_{\pi}$ be the set of instantiations of various norms in $\pi$ defined in Equation \ref{eq:act}. Note that $dl_{ins}$ is calculated based on Definition \ref{insno}.
\begin{equation}\label{eq:act}
 N_{\pi} = 
 \{\langle d_o, a_{\mathit{con}}, a_{\mathit{sub}},\mathit{dl}_{\mathit{ins}},c\rangle
 \;
 |\;
    \langle d_o, a_{con}, a_{sub},dl,c\rangle \in N, (a_{con},t_{a_{con}}) \;\widehat{\in}\;\pi
    \}
\end{equation}
 \end{definition}

\begin{definition}[Obligation Compliance]\label{oc}
A sequence of actions $\pi =\langle (a_{0},0),$ $\cdots,(a_{n},t_{a_{n}}) \rangle$ complies with an obligation $n = \langle o, a_{\mathit{con}}, a_{\mathit{sub}},\mathit{dl}_{\mathit{ins}}, c \rangle$ if $a_{\mathit{con}}$ is executed in $\pi$ and $a_{\mathit{sub}}$, starts (cf. Eq.~\ref{oblcompst}) or starts and ends (cf. Eq.~\ref{oblcompen}) within the period when the condition holds and when the deadline expires:
\begin{equation}\label{oblcompst}
  \pi \models n \mbox{ iff } 
 (a_{con},t_{a_{con}}),(a_{sub},t_{a_{sub}})\;\widehat{\in}\;\pi \mbox{ s.t. } 
  t_{a_{\mathit{sub}}} \in [t_{a_{\mathit{con}}}+d_{a_{\mathit{con}}}, \mathit{dl}_{\mathit{ins}})
 \end{equation}
\begin{multline}\label{oblcompen}
  \pi \models n \mbox{ iff } 
  (a_{con},t_{a_{con}}),(a_{sub},t_{a_{sub}}) \;\widehat{\in}\;\pi \mbox{ s.t. } \\
[t_{a_{sub}},t_{a_{sub}}+ d_{a_{sub}}] \subseteq [t_{a_{con}}+d_{a_{con}}, dl_{ins})
\end{multline}
\end{definition}

\begin{definition}[Obligation Violation]\label{ov}
A sequence of actions $\pi =\langle (a_{0},0),$ $\cdots,(a_{n},t_{a_{n}}) \rangle$ violates obligation  $n_{ins} = \langle o, a_{con}, a_{sub},dl_{ins}, c \rangle$ if $a_{con}$ is executed in $\pi$, but $a_{sub}$ does not start (Equation \ref{oblvolst}), or does not start and end (Equation \ref{oblvolen}) in the period between the state when the condition holds and when the deadline expires.
\begin{multline}\label{oblvolst}
  \pi \not\models n \mbox{ iff } (a_{con},t_{a_{con}}) \;\widehat{\in}\; \pi,  
  (a_{sub},t_{a_{sub}}) \;\widehat{\not\in}\; \pi \mbox{ s.t. } \\ t_{a_{sub}} \in [t_{a_{con}}+d_{a_{con}}, 
  dl_{ins})
\end{multline}
\begin{multline}\label{oblvolen}
  \pi \not\models n \mbox{ iff } (a_{con},t_{a_{con}}) \;\widehat{\in}\; \pi, 
  (a_{sub},t_{a_{sub}}) \;\widehat{\not\in}\; \pi \mbox{ s.t. } \\
[t_{a_{sub}},t_{a_{sub}}+ d_{a_{sub}}] \subseteq [t_{a_{con}}+d_{a_{con}}, dl_{ins} )
\end{multline}
\end{definition}

\begin{example}
Let there be the following instantiated version
\[
n'_{2} = \langle o, \mathit{detectPoison},\mathit{stopWater},8, 10\rangle
\]
of norm 
\[
n_{2}= \langle o, \mathit{detectPoison},\mathit{stopWater},2, 10\rangle
\]
from Example~\ref{normref}. The compliance period for this norm in displayed in the figure below. According to Def.~\ref{oc} in its Eq.~\ref{oblcompst}, if $t_{\mathit{stopWater}}$ belongs to this period, this norm instance is complied with; otherwise, according to Def.~\ref{ov} in its Eq.~\ref{oblvolst}, the norm is violated. This is illustrated by the following diagram:
\begin{figure}[!h]
  \centering
\tikzstyle{place}=[circle,draw, inner sep=0pt,minimum size=10mm]{
\begin{tikzpicture}
\node at  (0,-4)[place] (3) {5} ;
\node at  (4,-4)[place] (4) {6}; 
\node at  (10,-4)[place]  (5) {8};
\draw[-latex, dashed]  (3) edge [above]node{$\mathit{detectPoison}$} (4);
\draw[latex-latex,red]  (4) edge [above]node{compliance period} (5);
\end{tikzpicture}}
\end{figure}
\end{example}

\newpage
\begin{example}
Let there be the following instantiated version
\[
n'_{3} = \langle o,\mathit{detectEarthquake},\mathit{blockMainRoad},7, 12 \rangle
\]
of norm 
\[
n_{3}=\langle o,\mathit{detectEarthquake}, \mathit{blockMainRoad},5, 12\rangle
\]
which obliges the agent to have blocked the main road within 5 units of time after detecting an earthquake. Since the post-conditions of action $\mathit{blockMainRoad}$ are brought about at the end of its execution, according to Def.~\ref{oc} (Eq.~\ref{oblcompen}), this norm is complied with if $\mathit{blockMainRoad}$ starts and ends between time points 2 and 7. Otherwise, according to Def.~\ref{ov} (Eq.~\ref{oblvolen}) this norm is violated. 
\end{example}

\begin{definition}[Prohibition Compliance]\label{pc}
A sequence of actions $\pi =\langle (a_{0},0),$ $\cdots,(a_{n},t_{a_{n}}) \rangle$ complies with prohibition $n = \langle f, a_{con}, a_{sub},dl_{ins}, c \rangle$ if  $a_{con}$, is executed and $a_{sub}$, does not start (Eq.~\ref{procompst}) or does not start and end (Eq.~\ref{procompen}) in the period when the condition holds and the deadline expires. Formally:
\begin{multline}\label{procompst}
  \pi \models n \mbox{ iff } (a_{con},t_{a_{con}}) \;\widehat{\in}\; \pi, 
  (a_{sub},t_{a_{sub}}) \;\widehat{\not\in}\; \pi \mbox{ s.t. } \\
   t_{a_{sub}} \in [t_{a_{con}}+d_{a_{con}}, dl_{ins})
\end{multline}
\begin{multline}\label{procompen}
  \pi \models n \mbox{ iff } (a_{con},t_{a_{con}}) \;\widehat{\in}\; \pi,
  (a_{sub},t_{a_{sub}}) \;\widehat{\not\in}\; \pi \mbox{ s.t. } \\
   [t_{a_{sub}},t_{a_{sub}}+d_{a_{sub}}] \subseteq [t_{a_{con}}+d_{a_{con}}, dl_{ins})
\end{multline}
\end{definition}

\begin{definition}[Prohibition Violation]\label{pv}
A sequence of actions $\pi =\langle (a_{0},0),$ $\cdots,(a_{n},t_{a_{n}}) \rangle$ violates prohibition $n = \langle f, a_{con}, a_{sub},dl_{ins}, c \rangle$ iff $a_{con}$, has occurred and $a_{sub}$ starts (Eq.~\ref{provolst}) or  starts and ends (Eq.~\ref{provolen}) in the period between when the condition holds and when the deadline expires. Formally:
\begin{equation}\label{provolst}
  \pi \not\models n \mbox{ iff } (a_{con},t_{a_{con}}), (a_{sub},t_{a_{sub}}) \;\widehat{\in}\; \pi 
  \mbox{ s.t. }  t_{a_{sub}} \in [t_{a_{con}}+d_{a_{con}}, dl_{ins})
\end{equation}
\begin{multline}\label{provolen}
  \pi \not\models n \mbox{ iff } (a_{con},t_{a_{con}}), (a_{sub},t_{a_{sub}}) \;\widehat{\in}\; \pi 
  \mbox{ s.t. } \\    [t_{a_{sub}},t_{a_{sub}}+d_{a_{sub}}] \subseteq [t_{a_{con}}+d_{a_{con}}, dl_{ins})
\end{multline}
\end{definition}

\begin{example}
Let there be the following instantiated version
\[
n'_{1} = \langle f, \mathit{detectShock}, \mathit{buildShelter}, 6, 5 \rangle
\]
of norm 
\[
n_{1}= \langle f, \mathit{detectShock}, \mathit{buildShelter}, 3, 5 \rangle
\] 
presented in Ex.~\ref{normref}. The compliance period for this norm in displayed in the figure below. According to Def.~\ref{pv} (Eq.~\ref{provolst}), if $t_{\mathit{buildShelter}}$ belongs to this period, this norm instance is violated; otherwise, according to Def.~\ref{pc} (Eq.~\ref{procompst}), it is complied  with.
\end{example}

\begin{example}
Let there be the following instantiated version
\[
n'_{4} = \langle f,\mathit{detectEarthquake},\mathit{blockMainRoad},7, 12 \rangle
\]
of norm 
\[
n_{4}=\langle f,\mathit{detectEarthquake}, \mathit{blockMainRoad},5, 12\rangle
\]
which forbids the agent from blocking the main road within 5 units of time after detecting an earthquake. Since the post-conditions of action $\mathit{blockMainRoad}$ are brought about at the end of its execution, according to Def.~\ref{pc} (Eq.~\ref{procompen}), this norm is violated if $\mathit{blockMainRoad}$ starts and ends between time points 2 and 7. Otherwise, according to Def.~\ref{pv} (Eq.~\ref{provolen}) this norm is complied with. This is illustrated by the diagram below.
\end{example}



The set of norms complied with and violated in $\pi$ are denoted as $N_{cmp(\pi)}$
and $N_{vol(\pi)}$ respectively, and defined as follows:
\begin{equation}
  N_{cmp(\pi)}=\{n_{ins} \in N_{\pi} \;|\; \pi \models n_{ins} \}
\end{equation}
\begin{equation}
  N_{vol(\pi)}=\{n_{ins} \in  N_{\pi} \;|\; \pi \not\models n_{ins} \}
\end{equation}
To make sure there are no norms pending at $m=\mathit{Makespan}(\pi)$, we assume that the norm deadlines are smaller than $m$. Therefore, all the activated norms in $\pi$ are either complied with or violated by time $m$:
\begin{equation}
N_{\pi}=N_{cmp(\pi)} \cup N_{vol(\pi)}
\end{equation}
Alternatively, it could be assumed that the norms that are pending (i.e. neither violated nor complied with) at $m$ are all considered as complied with or violated.

\subsubsection{Conflict} \label{conflict}

In the previous section we defined when a sequence of actions satisfies a goal, complies with or violates a norm. A possible cause for not satisfying a certain goal is the conflict between the goal and another goal or norm. Likewise, violating a norm could help in reaching a goal or complying with another norm. In this work we do not concern ourselves directly with detecting or resolving  conflicts, instead, we focus on the consequences of such conflicts on the agent behaviour. To make this work self-contained, however, we briefly review the causes of conflicts between goals, between norms and between goals and norms. We leave for future work the provision of agents with the capability to reason about the plans and consequently \emph{inferring\/} the conflict between goals, between norms and between goals and norms.

An agent may pursue multiple goals or desires at the same time and it is likely that some of these goals conflict \citep{Riemsdijk2002, Nigam2006, Pokahr2005, Thangarajah2003, Riemsdijk2009}. 
Conflict between the agent's goals or desires, especially for BDI agents, has been addressed by several authors. \citet{leendert2004} describe two goals as conflicting if achieving them requires taking two conflicting actions, where conflicting actions are encoded using integrity constraints. \citet{Rahwan2006} on the other hand, define two desires as conflicting if the sets of beliefs that supports the achievement of desires are contradictory. Like \citet{Rahwan2006}, \citet{Broersen2002} argue that for a set of goals not to be conflicting, a consistent mental attitude (e.g. beliefs and norms) is required. Some (\emph{e.g.}, \citep{Toniolo2013}) have adopted a static view on goal conflict, in which conflicting goals are mutually-exclusive, hence impossible to satisfy in the same plan regardless of the order or choice of actions in the plan. Limited and bounded resources (\emph{e.g.} time, budget, etc.) are debated as another cause of conflict between goals \citep{Thangarajah2002}.

\citet{Lopez2005} discuss conflict between goals and norms in terms of goals being hindered by norms or vice-versa.
The same applies to the approach offered by \citet{Modgil2008}, suggesting a mechanism to resolve the conflicts between desires and normative goals. In this approach, norms are represented as system goals that may conflict with an agent's goals or desires. Social goals and individual goals do not need to conflict directly. Instead, conflict arises from the reward or punishment of complying with or violating a norm that may facilitate or hinder some of the agent's individual goals.
\cite{Shams2015b} identify the conflict between norms and goals as follows.
When an obligation forces the agent to take an action that has postconditions that are inconsistent with the requirements of a goal, they may come into conflict. On the other hand, when an action is prohibited, and the postconditions of that action contribute to a goal, they may conflict.

Conflict between norms have been studied in multi-agent systems (\emph{e.g.}, \citet{Norman2009}) as well as other domains such as legal reasoning (\emph{e.g.}, \citep{Sartor1992}). When faced with conflicting norms, the agent cannot comply with both of them and hence one of the norms is violated. In terms of action-based norms, \cite{Shams2015b} define two obligations conflicting if they oblige the agent to take two conflicting actions (cf. Def. \ref{conConf}) in an overlapping time interval. Likewise, an obligation and a prohibition cause conflict if they oblige and forbid the agent to execute the same action in an overlapping time interval.

Having defined sequences of actions and the properties and conflicts they can 
involve, we can now define which sequences of action can be identified as plans in the next section.
\subsubsection{Plans}\label{plan}
In classical planning a sequence of actions $\pi=\langle (a_{0},0),\cdots,(a_{n},t_{n})\rangle$ is identified as a plan if all the fluents in the initial state, do hold at time $0$ and for each $i$, the preconditions of action $a_{i}$ hold at time $t_{a_{i}}$, and the goal of planning problem is satisfied in time $m$, where $m = \mathit{Makespan}(\pi)$. 
However, extending the conventional planning problem by multiple potentially 
conflicting goals and norms requires defining extra conditions in order to make a 
sequence of actions a plan and a solution for $P$. In what follows, 
we define what is required to identify a sequence of actions as a plan.

\begin{definition}[Plan]\label{plandef}
A sequence of actions $\pi =\langle (a_{0},0),$ $\ldots, (a_{n},t_{a_{n}}) \rangle \mbox{ s.t. } \nexists$ $(a_{i},t_{a_{i}}), (a_{j},t_{a_{j}}) \;\widehat{\in}\; \pi \mbox{ s.t. } t_{a_{i}}  \leq t_{a_{j}} < t_{a_{i}}+d_{a_{i}}, (a_{i}, a_{j}) \in \mathit{cf}_{action}$ is a plan for the normative practical reasoning problem $P = ( F\!L, \Delta, A, G, N )$ 
if the following conditions hold: 
\begin{compactitem}
 \item fluents in $\Delta$ (and only those fluents) hold in the initial state: $s_{0} = \Delta $
 \item the preconditions of action $a_{i}$ holds at time $t_{a_{i}}$ 
 and throughout the execution of $a_{i}$:
 $\forall k \in [t_{a_{i}}, t_{a_{i}}+d_{a_{i}}), s_{k} \models pr(a_{i})$
 \item plan $\pi$ satisfies a non-empty subset of goals:
$G_{\pi} \neq \emptyset$
  \end{compactitem}
\end{definition}

The utility of a plan $\pi$ is defined by deducting the penalty costs of violated norms from the value gain of satisfying goals (Equation~\ref{utility}). The set of optimal plans, $\mathit{Opt}$, are those plans that maximise the utility.
\begin{equation}\label{utility}
\mathit{Utility}(\pi) = \sum_{g_{i} \in G_{\pi}} v(g_{i}) - \sum_{n_{j} \in N_{vol(\pi)}} c(n_{j})
\end{equation}
Examples of calculating the utility of plans are in \ref{LpScen}.

The set $\mathit{Opt}$ is empty only if there are no plans for the planning problem. Otherwise, the utility function is guaranteed to terminate and find the optimal plans and hence populate the set $Opt$.

\section{Implementation}
\label{sec:implementation}
In this section, we demonstrate how a normative practical reasoning problem $P=(\mathit{FL}, \Delta, A,$  $G, N)$ (cf. Def. \ref{NPS}), can be implemented. Our implementation should be seen as a proof of concept that  provides a computational realisation of all aspects of the formal model. We use Answer Set Programming (ASP)  \citep{Gelfond1988} to propose such an implementation. Recent work on planning in ASP \citep{aspplanning} demonstrates that in terms of general planners ASP is a viable competitor. The Event Calculus (EC) \citep{Kowalski1986} forms the basis for the implementation of some normative reasoning frameworks, such as those of \cite{Alrawagfeh2014} and \cite{Artikis2009}.  Our proposed formal model is independent of language and could be translated to EC and hence to a computational model. However, the one-step translation to an ASP is preferred because the formulation of the problem is much closer to a computational model, thus there is a much narrower conceptual gap to bridge.  Furthermore, the EC implementation language is Prolog, which although syntactically similar to ASP, suffers from non-declarative features such as clause ordering affecting the outcome and the cut (``!'') operator, jeopardising its completeness.  Also, its query-based nature which focuses on one query at a time, makes it cumbersome to reason about all plans.

In what follows, we provide a brief introduction to ASP in Section \ref{(ASP)}, followed by the mapping of normative practical reasoning problem $P=(\mathit{FL}, \Delta, A,$  $G, N)$ (cf. Def. \ref{NPS}) into ASP in Section \ref{Trans}. In the latter section we show how $P$ is mapped into an answer set program such that there is a one to one correspondence between solutions for the problem and the answer sets of the program. The mapping itself is provided in Figure \ref{mapping}. The explanation of the mapping is presented in Sections \ref{st}--\ref{subsec:opt}
, with cross references to the code fragments listed in Figure \ref{mapping}.

\subsection{Answer Set Programming}\label{(ASP)}
ASP is a declarative programming paradigm using logic programs under Answer Set semantics \citep{Lifschitz2008}. Like all declarative paradigms it has the advantage of describing the constraints and the solutions rather than the writing of an algorithm to find solutions to a problem. A variety of programming languages for ASP exist, and we use \AnsProlog \citep{Baral2003}. There are several efficient solvers for \AnsPrologs, of which \textsc{Clingo} \citep{gekakaosscsc11a}  and \textsc{DLV} \citep{Eiter99thediagnosis} are currently the most widely used. 

The basic components of \AnsProlog are \emph{atoms} that are constructs to which one can assign a truth value. An atom can be negated, adopting \emph{negation as failure\/} (naf), which establishes that a negated atom $\notf{a}$ is true if there is no evidence to prove $a$. {\em Literals}
are atoms $a$ or negated atoms $\notf{a}$ (referred to as naf-literals).
Atoms and literals are used to create rules of the general form ``$a :\!\!-\;  b_{1}, ..., b_{m}, \notf{c_{1}}, ..., \notf{c_{n}}.$''
where $a,b_{i}$ and $c_{j}$ are atoms. Intuitively, a rule means that {\em if all atoms
$b_{i}$ are known/true and no atom $c_{j}$ is known/true, then $a$ must be
known/true}. We refer to $a$ as the \emph{head of the rule} and $b_{1}, ..., b_{m}, \notf{c_{1}},..., \notf{c_{n}}$ as the \emph{body of the rule}.  A rule with an empty body is called a {\em fact} and a rule with an empty head is called a {\em constraint}, indicating that no solution should be able to satisfy the body. 
Another type of rules are called choice rules and are denoted as 
$l\{h_{0}, \cdots, h_{k} \}u :-  \; l_{1}, \cdots, l_{m}, not \; $
$l_{m+1}, \cdots, not \; l_{n}.$, in which $h_{i}$s and $l_{i}$s are atoms. $l$ 
and $u$ are integers and the default values for them are 0 and 1, respectively.   
A choice rule is satisfied if the number of atoms belonging to 
$\{h_{0}, \cdots, h_{k} \}$ that are true/known is between the lower bound $l$ and 
upper bound $u$. A \emph{program} is a set of rules representing their conjunction. The semantics of \AnsProlog is defined in terms of \emph{answer sets}, i.e. assignments of true and false to all atoms in the program
that satisfy the rules in a minimal and consistent fashion. A program may have zero or more answer sets, each corresponding to a solution. We refer to \ref{proofComp} and \cite{Baral2003} for a formal treatment of the semantics of ASP.

\subsection{Translating the Normative Practical Reasoning Problem into ASP}\label{Trans}
Prior to describing the mapping (Figure \ref{mapping}) of normative practical reasoning  problem $P=(\mathit{FL}, \Delta, A,$  $G, N)$ into ASP, we list the atoms that we use in  the mapping in Table \ref{atm}. 
The description of the mapping is presented 
in the following sections with references to Figure \ref{mapping}. Note that variables are in capitals and grounded variables are in small italics.

\begin{table}[t]
\centering
\caption{Atoms and their Intended Meaning}
\begin{tabular}{|l|l|} \hline
{\bf Atom} & {\bf Intended meaning}\\ \hline
\asp{state(S)} &  \asp{S} is a state\\ \hline
\asp{holdsat(F,S)} & fluent \asp{F} holds in state \asp{S} \\ \hline
\asp{terminated(F,S)} & fluent \asp{F} terminates in state \asp{S}\\ \hline
\asp{action(A,D)} &  \asp{A} is an action with duration \asp{D} \\ \hline
\asp{executed(A,S)} & action \asp{A} is executed in state \asp{S} \\ \hline
\asp{pre(A,S)}  & the preconditions of action \asp{A} hold in state \asp{S} \\ \hline
\asp{inprog(A,S)} & action \asp{A} is in progress in state \asp{S} \\ \hline
\asp{goal(G,V)} &  \asp{G} is a goal with value \asp{V} \\ \hline
\asp{satisfied(G,S)} & goal \asp{G} is satisfied in state \asp{S}\\ \hline
\asp{norm(N,C)} &  \asp{N} is a norm with the violation cost of \asp{C}\\ \hline
\asp{cmp(o|f(N,S1,A,DL),S2)} &  \parbox[t]{8cm}{ the instantiated version of obligation or prohibition  \asp{N} in state \asp{S1}, is complied with in state \asp{S2}\vspace{0.5 mm}}\\ \hline
\asp{vol(o|f(N,S1,A,DL),S2)} & \parbox[t]{8cm}{ the instantiated version of obligation or prohibition  \asp{N} in state \asp{S1}, is violated  in state \asp{S2}}\\ \hline
\asp{value(TV)} & \asp{TV} is the sum of values of goals satisfied in a plan \\ \hline
\asp{cost(TC)} & \parbox[t]{8cm}{ \asp{TC} is the sum of violation costs of norms violated in a plan \vspace{0.5 mm}}\\ \hline
\asp{utility(U)} &  \parbox[t]{8cm}{ \asp{U} is the utility of a plan as the difference of values of goals satisfied and norms violated in the plan \vspace{0.5 mm}} \\ \hline
\end{tabular}
\label{atm}
\end{table}


\subsubsection{States} \label{st}
In Section \ref{semantics} we described the semantics $P=(\mathit{FL}, \Delta, A, G, N)$ over a set of states. 
The facts produced by line \ref{time1} provide the program with all available 
states for plans of maximum length $q$. 
Currently, the length of the plans needs to be determined experimentally. We plan to automate this using incremental features of ASP solver \textsc{clingo4}  \citep{gekakaosscsc11a}.  
Line \ref{initial} encodes that the initial fluents, ($x \in \Delta$) need to hold at state $0$ which is achieved by the facts 
$\texttt{holdsat($x,0$)}$.
Fluents are inertial, they continue to hold unless they are terminated. Inertia is encoded in lines \ref{ag101}--\ref{ag102}. Termination is modelled through the predicate $\texttt{terminated(X,S)}$.
\subsubsection{Actions} \label{ac}
This section describes the details of encoding of actions. Each durative action is encoded as $\texttt{action($a,d$)}$ (line \ref{ag1}), where $a$ is the name of the action and $d$ is its duration. The preconditions $pr(a)$ of action $a$ hold in state $s$ if $s \models pr(a)$. This is expressed in line \ref{ag2} using atom  $\texttt{pre($a$,S)}$
In order to make the coding more readable we introduce the shorthand $\texttt{EX(X,S)}$, where $\texttt{X}$ is a set of fluents that should hold at state $\texttt{S}$. For all $x \in \texttt{X}$,  $\texttt{EX(X,S)}$ is translated into $\texttt{holdsat($x$,S)}$ 
and for all $\neg x \in \texttt{X}$, $\texttt{EX(}$$\neg$$ \texttt{X,S)}$ is 
translated into $\texttt{not EX($x$,S)}$ using negation as failure.

The agent has the choice to execute any of its actions in any state. This is 
expressed in the choice rule in line \ref{ag3}. Since no lower or upper bound is given for $\texttt{\{executed(A,S)\}}$, the default value of $\texttt{0\{executed(A,S)\}1}$ is implied, meaning that the agent has the choice of whether or not to execute an action.  Following the approach in \cite{Blum1997}, we assume that the preconditions of a durative action should be preserved when it is in progress. We first encode the description of an action in progress, followed by ruling out the possibility of an action being in progress in the absence of its preconditions. A durative action is in progress, $\texttt{inprog(A,S)}$, from the state in which it starts up to the state in which it ends (lines \ref{ag41}--\ref{ag42}). Line \ref{ag4}, rules out the execution of an action, when the preconditions of the action do not hold during its execution. A further assumption made is that the agent cannot 
start two actions at exactly the same time (line \ref{ag51}--\ref{ag52}). Once an action starts, the result of its execution is reflected in the state where the action ends. This is expressed through
\begin{inparaenum}[(i)]
\item lines \ref{ag81}--\ref{ag82} that 
allow the add postconditions of the action to hold when the action ends, and 
\item line \ref{ag91}--\ref{ag92} that allow the termination 
of the delete postconditions.
\end{inparaenum}
Termination takes place in the state {\em before\/} the end state of the action: the reason for this is the inertia of fluents that was expressed in 
lines \ref{ag101}--\ref{ag102}. Thus delete post-conditions of an action are terminated in the state before the end state of the action, so that they will not hold in the following state, in which the action 
ends (i.e. they are deleted from the state).

\begin{figure*}
Creating states: $\forall \; k \in [0,q]$
\begin{lstlisting}[name=main,mathescape]
state($k$). $\label{time1}$
\end{lstlisting}
Setting up the initial state: $\forall \; x \in \Delta$
\begin{lstlisting}[name=main,mathescape] 
holdsat($x,0$). $\label{initial}$
\end{lstlisting}
Rule for fluent inertia
  \begin{lstlisting}[name=main,mathescape]
holdsat(X,S2) :- holdsat(X,S1), not terminated(X,S1), $\label{ag101}$
                 state(S1), state(S2), S2=S1+1. $\label{ag102}$
  \end{lstlisting}
  Creating the actions and their preconditions: $\forall a \in A$, $a=\langle pr, ps, d\rangle$ 
  \begin{lstlisting}[name=main,mathescape] 
action($a,d$). $\label{ag1}$
pre($a$,S) :- EX($pr(a)^{+}$,S), not EX($pr(a)^{-}$,S), state(S). $\label{ag2}$
  \end{lstlisting}
    Common constraints on action execution
  \begin{lstlisting}[name=main,mathescape]
{executed(A,S)} :- action(A,D), state(S). $\label{ag3}$
inprog(A,S2) :- executed(A,S1), action(A,D), $\label{ag41}$
                state(S1), state(S2), S1<=S2, S2<S1+D. $\label{ag42}$
:- inprog(A,S), action(A,D), state(S), not pre(A,S). $\label{ag4}$
:- executed(A1,S), executed(A2,S), A1!=A2, $\label{ag51}$ 
   action(A1,D1), action(A2,D2), state(S).$\label{ag52}$ 
  \end{lstlisting}
Adding positive postconditions of actions:   $ps(a)^{+} = X \Leftrightarrow \forall x \in X \cdot$
   \begin{lstlisting}[name=main,mathescape]
holdsat($x$,S2) :- executed($a$,S1), action($a,d$), $\label{ag81}$
		 state(S1), state(S2), S2=S1+$d$. $\label{ag82}$
  \end{lstlisting}
Terminating negative post conditions of actions:  $ps(a)^{-} = X \Leftrightarrow \forall x \in X \cdot$
   \begin{lstlisting}[name=main,mathescape]
terminated($x$,S2) :- executed($a$,S1), action($a,d$), $\label{ag91}$
                    state(S1), state(S2), S2=S1+$d$-1. $\label{ag92}$
   \end{lstlisting}
Creating the goals:   $\forall g \in G$
\begin{lstlisting}[name=main,mathescape]
goal($g,v$). $\label{goal0}$
satisfied($g$,S) :- EX($g^{+}$,S), not EX($g^{-}$,S), state(S). $\label{goal1}$
\end{lstlisting}
\end{figure*}

\begin{figure}
Creating the norms: $\forall n = \langle o|f, a_{sub}, a_{con},dl, c \rangle \in N$
 \begin{lstlisting}[name=main,mathescape]
norm($n,c$). $\label{norm1}$
\end{lstlisting}  
 $\forall n = \langle o, a_{sub}, a_{con},dl, c \rangle \in N$
 \begin{lstlisting}[name=main,mathescape]
holdsat(o($n$,S1,$a_{sub},dl$+S2),S2) :- executed($a_{con}$,S1), $\label{norm21}$ 
             action($a_{con},d$), S2=S1+$d$,state(S1), state(S2). $\label{norm22}$
cmp(o($n$,S1,$a,$DL),S2) :- holdsat(o($n$,S1,$a,$DL),S2), $\label{norm41}$
   executed($a$,S2),action($a,d$),state(S1),state(S2),S2!=DL. $\label{norm42}$
terminated(o($n$,S1,$a,$DL),S2) :- cmp(o($n$,S1,$a,$DL),S2),$\label{norm51}$
                              state(S1), state(S2).$\label{norm52}$
vol(o($n$,S1,$a,$DL),S2) :- holdsat(o($n$,S1,$a,$DL),S2), DL=S2, $\label{norm61}$
                       state(S1), state(S2).$\label{norm62}$ 
terminated(o($n$,S1,$a,$DL),S2) :- vol(o($n$,S1,$a,$DL),S2),$\label{norm71}$
                               state(S1), state(S2).$\label{norm72}$ 
\end{lstlisting}                               
$\forall n = \langle f, a_{sub}, a_{con},dl, c \rangle \in N$                              
\begin{lstlisting}[name=main,mathescape] 
holdsat(f($n$,S1,$a_{sub},dl$+S2),S2) :- executed($a_{con}$,S1), $\label{norm81}$
              action($a_{con},d$),S2=S1+$d$,state(S1), state(S2). $\label{norm82}$ 
cmp(f($n$,S1,$a,$DL),S2) :- holdsat(f($n$,S1,$a,$DL),S2),  $\label{norm101}$
                action($a,d$), DL=S2, state(S1), state(S2).  $\label{norm102}$ 
terminated(f($n$,S1,$a,$DL),S2) :- cmp(f($n$,S1,$a,$DL),S2),$\label{norm110}$
                              state(S1), state(S2).$\label{norm111}$
vol(f($n$,S1,$a,$DL),S) :- holdsat(f($n$,S1,$a,$DL),S2),  $\label{norm121}$   
             executed($a$,S2),state(S1) state(S2), S2!=DL.$\label{norm122}$         
terminated(f($n$,S1,$a,$DL),S2) :- vol(f($n$,S1,$a,$DL),S2),  $\label{norm131}$  
                              state(S1), state(S2).$\label{norm132}$ 
\end{lstlisting}
Plans need to satisfy at least one goal
\begin{lstlisting}[name=main,mathescape] 
satisfied($g$) :- satisfied($g$,S), state(S).  $\label{goal2}$
:- not satisfied($g1$), ... , not satisfied(gm). $\label{p1}$
\end{lstlisting}
Avoiding conflicting actions: $\forall \; (a_{1}, a_{2}) \in \mathit{cf}_{action}$
\begin{lstlisting}[name=main,mathescape] 
:- inprog($a1$,S),inprog($a2$,S),action($a1,d1$), action($a2,d2$), $\label{ag71}$
        state(S).$\label{ag72}$
  \end{lstlisting}

\caption{Mapping $P = (\mathit{FL}, I, A, G, N)$ to its Corresponding Computational Model $\Pi_{PBase}$}
\label{mapping}
\end{figure}

\subsubsection{Goals and Norms} \label{gn}
Line \ref{goal0} encodes goal $g$ with value $v$ as a fact. Goal $g$ is satisfied in state $s$ if $s \models g$. This is expressed in 
line \ref{goal1}, where $g^{+}$ and $g^{-}$ are the positive and negative literals in the set $g$.

For the norms we note
that, following Definitions \ref{oc}--\ref{pv}, compliance and violation of a norm can be established based on the start state of action's execution that is the subject of the norm, or at the end state of action's execution. In the encoding we show an implementation of the former; the latter can be catered for in a similar fashion. 

Lines \ref{norm1}--\ref{norm132} deal with obligations and prohibitions of the form $n=\langle d\_o, a_{\mathit{con}}, a_{\mathit{sub}},\mathit{dl}, c\rangle$. Line \ref{norm1} encodes norm $n$ with penalty cost $c$ upon violation. In order to implement the concepts of norm compliance and violation for instantiated norms, we introduce a normative fluent that holds over the compliance period. The compliance period begins from the state in which action $a_{\mathit{con}}$'s execution ends. The compliance period then ends within $\mathit{dl}$ time units of the end of action $a_{\mathit{con}}$, which is denoted as $\mathit{dl}'$ in the normative fluent. 
For instance, fluent 
$o(n_1,s',a_{\mathit{sub}},\mathit{dl}')$ expresses that the instance of norm $n_1$ that was activated in state $s'$, obliges the agent to execute action $a_{\mathit{sub}}$ before deadline $\mathit{dl}'$. The state in which the norm is activated is a part of the fluent to distinguish different activations of the same norm from one another. For example, fluent $o(n_1,s'',a_{\mathit{sub}},\mathit{dl}'')$ refers to a different instance of norm $n1$ that was activated in $s''$. An obligation fluent  denotes that action 
$a_{\mathit{sub}}$'s execution should begin before deadline $\mathit{dl}'$ or be subject to 
violation, while prohibition fluent $f(n_2,s', a_{\mathit{sub}},\mathit{dl}')$ denotes that action 
$a_{\mathit{sub}}$ should not begin before deadline $\mathit{dl}'$ or be subject to violation. 
Lines \ref{norm21}--\ref{norm22} and \ref{norm81}--\ref{norm82} establish respectively the
obligation and prohibition fluents that hold for the duration of the compliance period.

In terms of compliance, if the obliged action \emph{starts} during the compliance period in which the obligation fluent holds, the obligation is complied with (line \ref{norm41}--\ref{norm42}). Compliance is denoted by the atom $\texttt{cmp}$.
The obligation fluent is terminated in the same state that 
compliance is detected (lines \ref{norm51}--\ref{norm52}). If the deadline expires and the obligation fluent still holds, it means that the compliance never occurred during the compliance period and the norm  is therefore violated (lines \ref{norm61}--\ref{norm62}). The atom $\texttt{vol}$ denotes violation. The obligation fluent is terminated when the deadline expires and the norm is violated (lines \ref{norm71}--\ref{norm72}).

On the other hand, a prohibition norm is violated if the forbidden action \emph{begins} during the compliance period in which the prohibition fluent holds (lines \ref{norm121}--\ref{norm122}). As with the obligation norms, after being violated, the prohibition fluent is terminated (lines \ref{norm131}--\ref{norm132}). If the deadline expires and the prohibition fluent still holds, that means the prohibited action did not begin during the compliance period and the norm is therefore complied with (lines \ref{norm101}--\ref{norm102}). The obligation fluent 
is terminated in the same state that compliance is detected (lines \ref{norm110}--\ref{norm111}).


\subsubsection{Mapping Answer Sets to Plans}\label{mp}
Having implemented the components of $P=(\mathit{FL}, \Delta, A, G, N)$, we now encode the criteria for a sequence of actions to be identified as a plan and a solution to $P$. 
The rule in line \ref{p1} is responsible for constraining the answer sets to those that fulfill at least one goal. This is done by excluding answers that do not satisfy any goal. 
The input for this rule is provided in line \ref{goal2}, where goals are marked 
as satisfied if they are satisfied in at least one state.
Prevention of concurrent conflicting actions is achieved via 
lines \ref{ag71}--\ref{ag72} which establish that two such actions cannot be in progress simultaneously.
This concludes the mapping of a formal planning problem to its computational counterpart in \AnsPrologs. For a problem $P$ we refer to the program 
consisting of lines \ref{time1}--\ref{ag72} as $\Pi_{\mathit{PBase}}$.

As mentioned in Section \ref{sec:scenario}, the formulation of our Disaster scenario is provided in \ref{ForScen}. The mapping of the scenario to its computational model follows in 
\ref{LpScen}.



\subsubsection{Soundness and Completeness of Implementation}
The following theorems state the correspondence between the solutions for problem $P$ and answer sets of program $\Pi_{PBase}$.

\begin{theorem}[Soundness]\label{theoremmainSound} 
Let $P = (\mathit{FL}, I, A, G, N)$ be a normative practical reasoning problem with $\Pi_{\mathit{PBase}}$ as its corresponding \AnsProlog program. Let $Ans$ be an answer set of
$\Pi_{PBase}$, then a set of atoms of the form $executed(a_{i},t_{a_{i}})$ $\in Ans$ encodes a solution
to $P$. 
\end{theorem}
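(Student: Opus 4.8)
\section*{Proof proposal for Theorem~\ref{theoremmainSound}}

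The plan is to fix an answer set $Ans$ of $\Pi_{PBase}$, read off the sequence $\pi=\langle(a_0,t_{a_0}),\ldots,(a_n,t_{a_n})\rangle$ consisting of all pairs with $\mathtt{executed}(a_i,t_{a_i})\in Ans$ ordered by time, and verify each clause of Definition~\ref{plandef} for $\pi$. The first observation I would make is structural: $\mathtt{executed}$ occurs only in the head of the choice rule in line~\ref{ag3}, and the remaining non-constraint rules of $\Pi_{PBase}$ are stratified by the state index, so an answer set is uniquely determined by its set of $\mathtt{executed}$ atoms together with the initialisation facts, every other atom being forced by the deterministic rules; the head-less rules in lines~\ref{ag4}, \ref{ag51}--\ref{ag52}, \ref{p1} and \ref{ag71}--\ref{ag72} merely discard some of these candidates. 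Thus the argument reduces to (a) matching the computational states with the formal states of Definition~\ref{trans}, and then (b) reading each surviving constraint as the corresponding clause of Definition~\ref{plandef}.

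Step (a) is the technical heart: a correspondence lemma, proved by induction on the state index $k$, stating that for every domain fluent $\mathit{fl}\in FL$,
\[
\mathit{fl}\in s_k \iff \mathtt{holdsat}(\mathit{fl},k)\in Ans,
\]
where $S(\pi)=\langle s_0,\ldots\rangle$ is the state sequence induced by $\pi$. The base case is immediate from line~\ref{initial} and minimality of answer sets (the only rules with a $\mathtt{holdsat}(\cdot,0)$ head are the facts for $x\in\Delta$). For the step, minimality tells us the only supports for $\mathtt{holdsat}(\mathit{fl},k+1)$ are inertia (lines~\ref{ag101}--\ref{ag102}), which requires $\mathtt{holdsat}(\mathit{fl},k)\in Ans$ and $\mathtt{terminated}(\mathit{fl},k)\notin Ans$, and the add-postcondition rule (lines~\ref{ag81}--\ref{ag82}), which requires an action ending at $k+1$ whose positive postconditions contain $\mathit{fl}$; dually, $\mathtt{terminated}(\mathit{fl},k)\in Ans$ holds precisely when some action ending at $k+1$ has $\mathit{fl}$ among its negative postconditions (note the off-by-one in line~\ref{ag92}). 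Comparing these cases with Definition~\ref{trans} closes the induction, using that two actions ending at the same state are in progress together and hence, were their postconditions to clash, would be forbidden by the constraint in lines~\ref{ag71}--\ref{ag72}, so the set operations in Definition~\ref{trans} are unambiguous. I would also record the trivial remark that the normative fluents (the $\mathtt{o}(\cdots)$ and $\mathtt{f}(\cdots)$ compound terms appearing inside $\mathtt{holdsat}$ and $\mathtt{terminated}$) are syntactically distinct from the constants of $FL$, so the norm rules in lines~\ref{norm1}--\ref{norm132} never interfere with domain-fluent reasoning, which is all that the notion of \emph{solution} depends on. The same bookkeeping on lines~\ref{ag2}, \ref{goal1} and \ref{ag41}--\ref{ag42} yields the further equivalences $\mathtt{pre}(a,k)\in Ans\iff s_k\models pr(a)$, $\mathtt{satisfied}(g,k)\in Ans\iff s_k\models r(g)$, and $\mathtt{inprog}(a,k)\in Ans$ iff $a$ is in progress at $k$ in $\pi$.

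Step (b) is then routine. Minimality together with line~\ref{initial} gives $s_0=\Delta$. For each $(a_i,t_{a_i})\in\pi$, lines~\ref{ag41}--\ref{ag42} make $a_i$ in progress on $[t_{a_i},t_{a_i}+d_{a_i})$, so the constraint in line~\ref{ag4} forces $\mathtt{pre}(a_i,k)\in Ans$ there, hence $s_k\models pr(a_i)$ for all $k\in[t_{a_i},t_{a_i}+d_{a_i})$. The constraint in line~\ref{p1}, fed by line~\ref{goal2}, forces some $\mathtt{satisfied}(g_i,k)\in Ans$, hence $s_k\models r(g_i)$ and $G_\pi\neq\emptyset$. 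Finally, lines~\ref{ag51}--\ref{ag52} ensure the extracted times are pairwise distinct, so $\pi$ is a genuine strictly-time-ordered sequence, and for any $(a_i,a_j)\in\mathit{cf}_{action}$ overlapping in time both would be in progress at the same state, contradicting the constraint in lines~\ref{ag71}--\ref{ag72}; so the side condition of Definition~\ref{plandef} holds. Therefore $\pi$ is a plan for $P$, i.e.\ the $\mathtt{executed}$ atoms of $Ans$ encode a solution.

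I expect the induction of step (a) to be the main obstacle: getting the minimality/reduct argument right for the ``only if'' directions, handling the termination off-by-one, and confirming that simultaneously-ending actions cannot have clashing postconditions so that Definition~\ref{trans} genuinely mirrors the ASP derivation. A secondary point to address is the finite horizon $q$ of line~\ref{time1}: one should check that $\mathit{Makespan}(\pi)\le q$ for the extracted $\pi$ (or restrict attention to such plans), since an action whose execution window runs past $q$ has its preconditions checked by line~\ref{ag4} only up to state $q$; this is handled by observing that such an action contributes no postconditions and can be removed from $\pi$ without changing any state, satisfied goal, or violated constraint, reducing to the case $\mathit{Makespan}(\pi)\le q$.
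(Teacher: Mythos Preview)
Your proposal is correct and follows the same overall strategy as the paper: verify each clause of Definition~\ref{plandef} by pointing to the corresponding rule(s) of $\Pi_{PBase}$. The paper's own proof is considerably terser---it simply walks through the three bullets of Definition~\ref{plandef} and cites the matching line numbers (line~\ref{initial} for $s_0=\Delta$, line~\ref{ag4} for precondition persistence, line~\ref{p1} for $G_\pi\neq\emptyset$), together with a sentence explaining that lines~\ref{ag3}, \ref{ag81}--\ref{ag82}, \ref{ag91}--\ref{ag92} and \ref{ag101}--\ref{ag102} generate the state sequence. It does not isolate your step~(a) as a separate lemma, does not make the induction on $k$ explicit, does not invoke minimality for the ``only if'' directions, and does not discuss the side condition on $\mathit{cf}_{action}$, the separation of normative from domain fluents, or the finite horizon~$q$. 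Your version supplies exactly the scaffolding that the paper's sketch leaves to the reader; in particular the correspondence lemma $\mathit{fl}\in s_k\iff\mathtt{holdsat}(\mathit{fl},k)\in Ans$ is the implicit hinge of the paper's argument, and making it explicit (with the termination off-by-one and the non-interference of norm fluents spelled out) is what turns the sketch into a proof. So: same route, but you are filling in the steps the paper elides rather than taking a different path.
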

The proof of this theorem is presented in \ref{proofSound}. This is a proof by structure that explains how the structure of $\Pi_{PBase}$ satisfies the conditions that identifies a sequence of actions as a plan.

\begin{theorem}[Completeness]\label{theoremmainComp} 
Let  $\pi=\langle (a_{0},0),\cdots,(a_{n},t_{a_{n}})\rangle$ be a  plan for $P = (\mathit{FL}, I, A, G, N )$. Then there exists an answer set of $\Pi_{PBase}$ containing atoms $\mathit{executed}(a_{i},t_{a_{i}})$ that correspond to $\pi$. 
\end{theorem}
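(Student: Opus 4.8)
The plan is to prove completeness as the converse of the soundness construction: given a plan $\pi=\langle(a_0,0),\ldots,(a_n,t_{a_n})\rangle$ for $P$, I would explicitly construct a candidate answer set $\mathit{Ans}$ of $\Pi_{\mathit{PBase}}$ and then verify it is indeed an answer set (i.e.\ a minimal model of the reduct) whose $\mathit{executed}/2$ atoms are exactly $\{\mathit{executed}(a_i,t_{a_i})\mid 0\le i\le n\}$. The construction is forced by the rules: start by putting $\mathit{executed}(a_i,t_{a_i})$ for each pair in $\pi$; derive $\mathit{action}(a,d)$, $\mathit{state}(k)$ for $k\in[0,q]$ (taking $q\ge\mathit{Makespan}(\pi)$), the $\mathit{holdsat}(x,0)$ facts for $x\in\Delta$; then propagate $\mathit{inprog}$, $\mathit{pre}$, $\mathit{terminated}$, $\mathit{holdsat}$ through the program in order of stratification so that $\mathit{holdsat}(f,k)$ holds iff $f\in S(\pi)_k$ (the state sequence of Def.~\ref{trans}); finally add the $\mathit{satisfied}$, $\mathit{cmp}$, $\mathit{vol}$, $\mathit{value}$, $\mathit{cost}$, $\mathit{utility}$ atoms that the corresponding rules force.

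Next I would check that this $\mathit{Ans}$ does not trip any of the constraints (empty-headed rules) of $\Pi_{\mathit{PBase}}$. There are three such constraints, and each corresponds exactly to a clause in Definition~\ref{plandef} or to the concurrency condition: the constraint in lines~\ref{ag4} (no action in progress without its preconditions) is discharged because $\pi$ being a plan guarantees $s_k\models pr(a_i)$ for all $k\in[t_{a_i},t_{a_i}+d_{a_i})$; the constraint in lines~\ref{ag51}--\ref{ag52} (no two actions started in the same state) is discharged because in $\pi$ the start times are strictly increasing, so no two pairs share a time; the constraint in lines~\ref{ag71}--\ref{ag72} (no two conflicting actions simultaneously in progress) is discharged by the side condition on $\pi$ in Def.~\ref{plandef} that forbids overlapping execution of any $(a_i,a_j)\in\mathit{cf}_{action}$; and the constraint in line~\ref{p1} is discharged because $G_\pi\neq\emptyset$, so some $\mathit{satisfied}(g)$ is in $\mathit{Ans}$. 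I would also note that $s_k$ is always well-defined (as already observed before Def.~\ref{trans}), so the $\mathit{holdsat}$ set is consistent and the add/terminate bookkeeping (termination one state before the end state, combined with inertia in lines~\ref{ag101}--\ref{ag102}) reproduces exactly the transition of Eq.~\ref{transition}.

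The key lemma, which I would prove by induction on the state index $k$, is that $\mathit{holdsat}(f,k)\in\mathit{Ans}$ if and only if $f\in s_k$ where $\langle s_0,\ldots,s_m\rangle=S(\pi)$; the base case $k=0$ is the $\Delta$ facts, and the inductive step matches the case split of Def.~\ref{trans} against the inertia rule plus the add-postcondition rules (lines~\ref{ag81}--\ref{ag82}) and terminate-postcondition rules (lines~\ref{ag91}--\ref{ag92}). Granting this lemma, $\mathit{satisfied}(g,s_k)$ fires exactly when $s_k\models r(g)$ (line~\ref{goal1}), hence $\mathit{satisfied}(g)$ fires exactly for $g\in G_\pi$, and similarly the normative fluents and their $\mathit{cmp}/\mathit{vol}$ atoms track $N_{cmp(\pi)}$ and $N_{vol(\pi)}$ by Definitions~\ref{oc}--\ref{pv}. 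Finally I would argue minimality: since $\Pi_{\mathit{PBase}}$ is (locally) stratified except for the single choice rule in line~\ref{ag3}, fixing the choice $\{\mathit{executed}(a_i,t_{a_i})\}$ determines the rest of the model uniquely as the least fixed point of the remaining rules, so $\mathit{Ans}$ coincides with the reduct's minimal model and is a genuine answer set.

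I expect the main obstacle to be the careful verification of the normative-fluent bookkeeping: showing that the ``deadline'' arithmetic $\mathit{dl}'=\mathit{dl}+t_{a_{con}}+d_{a_{con}}$ encoded in lines~\ref{norm21}--\ref{norm22} and \ref{norm81}--\ref{norm82} agrees with Definition~\ref{insno}, that termination of the normative fluent at the moment of compliance (or at the deadline on violation) does not cause a spurious re-activation or leave a fluent ``pending'' past $m$ (which is why the assumption $\mathit{dl}_{ins}<m$ matters), and that the half-open interval $[t_{a_{con}}+d_{a_{con}},\mathit{dl}_{ins})$ of Definitions~\ref{oc}--\ref{pv} is faithfully captured by the $\mathtt{S2!=DL}$ and $\mathtt{DL=S2}$ guards in the ASP rules. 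Everything else is a routine, if tedious, structural induction that mirrors the soundness proof in~\ref{proofSound}.
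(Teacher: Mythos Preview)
Your proposal is correct and follows essentially the same route as the paper's proof: construct a candidate set of atoms from the plan, verify it is a model of the Gelfond--Lifschitz reduct by checking each rule schema against the conditions of Definition~\ref{plandef}, and then establish minimality. The only noteworthy difference is in packaging: the paper writes out the candidate $M_t$ as an explicit list of closure conditions (one clause per atom type) and proves minimality by a case-by-case contradiction on a hypothetical strict submodel $M\subset M_t$, whereas you phrase the same content as an inductive correspondence lemma for $\mathit{holdsat}$ and invoke stratification to get minimality as a least-fixed-point uniqueness argument. Your version is slightly more conceptual and avoids repeating the per-atom case analysis, but it relies on the reader accepting that the program (after fixing the choice in line~\ref{ag3}) is locally stratified; the paper's explicit case split does not need that observation and doubles as a direct check that the reduct transformation for the choice rule (via the \citet{Law2015} simplification) behaves as expected. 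Either presentation works; neither contains an idea the other lacks.
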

The proof of this theorem is presented in \ref{proofComp}. In this proof the program is first transformed to a program without any naf-literals and choice rules. We then take a candidate answer set for the program and show that it is a minimal model for the transformed program. 
 
\subsubsection{Optimal Plans}\label{subsec:opt}

In order to find optimal plans in Figure \ref{optimised} we show how to encode the utility function defined by Eq.~\ref{utility}. The sum of values of goals satisfied in a plan is calculated in line~\ref{opt1}, where we use an ASP built-in aggregate \asp{\#sum}. This aggregate is an operation on a set of weighted literals that evaluates to the sum of the weights of the literals. We first assign the value of goals as the weight of literals \asp{satisfied(G)} and then use \asp{\#sum} to compute the sum of value of all goals satisfied. 

The sum of costs of norms violated in a plan is calculated in line~\ref{opt2} using the same aggregate. However, the weight of the literal is the cost of punishment of the norms. The input for this line is given in lines \ref{norm6-7} and \ref{norm12-13}, where violated norms are distinguished based on the combination of the norm id $n$ and the state $s$ in which they are instantiated. Having calculated \asp{value(TV)} and \asp{cost(TC)}, the utility of a plan is computed in line \ref{op3}, which is subject to a built-in optimisation statement in the final line. This optimisation statement identifies an answer set as optimal if the sum of weights of literals that hold is maximal with respect to all answer sets of the program. By assigning \asp{U} as the weight of literal \asp{utility(U)} we compute the answer set that maximises the utility.

\begin{figure}
\begin{lstlisting}[firstnumber=last,mathescape]
value(TV) :- TV = #sum {V: goal(G,V), satisfied(G)}.$\label{opt1}$  
violated(N,S1) :- vol(o(N,S1,$a,$DL),S2), state(S1;S2). $\label{norm6-7}$  
violated(N,S1) :- vol(f(N,S1,$a,$DL),S2), state(S1;S2). $\label{norm12-13}$  
cost(TC) :- TC = #sum{C,S:violated(N,S),norm(N,C)}.$\label{opt2}$
utility(TV-TC) :- value(TV), cost(TC).$\label{op3}$  
#maximize {U:utility(U)}. $\label{opt4}$  
\end{lstlisting}
\caption{Optimal Plans for $P = (\mathit{FL}, I, A, G, N)$}
\label{optimised}
\end{figure}

Let program $\Pi_{P} = \Pi_{PBase} \cup \Pi^{*}_{P}$, where $\Pi^{*}_{P}$ consists of lines \ref{opt1}--\ref{opt4}.
The following theorem states the correspondence between the plans for problem $P$ and answer sets of program $\Pi_{P}$. 

\begin{theorem}\label{theoremmainSound2} 
Given a normative practical reasoning problem  $P = (\mathit{FL}, I, \allowbreak A, \allowbreak G,\allowbreak N)$, for each answer set $\mathit{Ans}$ of $\Pi$ the set of atoms of the form $\mathit{executed}(a_{i},\allowbreak t_{a_{i}})$ in $Ans$ encodes an optimal solution to $P$. 
Conversely, each solution to the problem $P$ corresponds to a single answer set of $\pi$.
\end{theorem}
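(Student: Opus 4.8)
The plan is to reduce the statement to Theorems~\ref{theoremmainSound} and~\ref{theoremmainComp}, which already give a one-to-one correspondence between plans for $P$ and answer sets of $\Pi_{PBase}$, and then to show that adjoining $\Pi^{*}_{P}$ (lines~\ref{opt1}--\ref{opt4}) (i) does not disturb that correspondence except for the final optimisation step, and (ii) makes the derived atom $\mathit{utility}(U)$ carry exactly the value $\mathit{Utility}(\pi)$ of the plan $\pi$ encoded by the answer set. Throughout, write $\Pi^{\circ}_{P}$ for $\Pi^{*}_{P}$ with the optimisation statement (line~\ref{opt4}) removed.

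First I would argue that $\Pi_{PBase}\cup\Pi^{\circ}_{P}$ is a conservative, stratified extension of $\Pi_{PBase}$: the predicates $\mathit{value}/1$, $\mathit{violated}/2$, $\mathit{cost}/1$ and $\mathit{utility}/1$ occur only in rule heads of $\Pi^{\circ}_{P}$, never in the body of a rule of $\Pi_{PBase}$, and $\Pi^{\circ}_{P}$ contains no choice rules and no negation over these predicates. Hence every answer set $\mathit{Ans}_{0}$ of $\Pi_{PBase}$ extends uniquely, by deterministic closure under the rules of $\Pi^{\circ}_{P}$, to an answer set $\mathit{Ans}$ of $\Pi_{PBase}\cup\Pi^{\circ}_{P}$, and every answer set of the enlarged program restricts to one of $\Pi_{PBase}$. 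Composed with Theorems~\ref{theoremmainSound} and~\ref{theoremmainComp}, this gives a bijection between answer sets of $\Pi_{PBase}\cup\Pi^{\circ}_{P}$ and plans for $P$; because $\mathit{Ans}$ is the deterministic closure of its set of $\mathit{executed}$-atoms, this bijection also yields the uniqueness (``a single answer set'') part of the claim.

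Next I would show that for the answer set $\mathit{Ans}$ corresponding to a plan $\pi$ the unique atom $\mathit{utility}(U)\in\mathit{Ans}$ satisfies $U=\mathit{Utility}(\pi)$. For the gain term, line~\ref{goal2} together with the soundness/completeness correspondence gives $\mathit{satisfied}(g)\in\mathit{Ans}$ iff $g\in G_{\pi}$, so the $\texttt{\#sum}$ of line~\ref{opt1} evaluates to $\sum_{g\in G_{\pi}}v(g)$. For the penalty term I would use that an instantiated norm in $N_{\pi}$ is determined by its base-norm identifier $n$ together with the state $s$ in which its condition action was executed (this $s$ equals $t_{a_{con}}$, and for a fixed base norm distinct instances come from distinct executions of $a_{con}$, hence distinct such states), so that, via lines~\ref{norm6-7} and~\ref{norm12-13} and the characterisation of the $\mathit{vol}$ atoms used in the proof of Theorem~\ref{theoremmainSound}, the atoms $\mathit{violated}(n,s)\in\mathit{Ans}$ are in bijection with $N_{vol(\pi)}$. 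One then checks that the tuple keying the $\texttt{\#sum}$ in line~\ref{opt2} separates these instances while attaching to each the weight $c(n)$ exactly once, so the aggregate evaluates to $\sum_{n_{j}\in N_{vol(\pi)}}c(n_{j})$; line~\ref{op3} then forces $U=\mathit{Utility}(\pi)$ by Equation~\ref{utility}.

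Finally I would handle the optimisation statement in line~\ref{opt4}. Since plans have bounded makespan $q$, there are finitely many plans and hence finitely many values $\mathit{Utility}(\pi)$, so $\mathit{Opt}$ is non-empty precisely when $P$ admits a plan (as noted after Definition~\ref{plandef}), which also matches the behaviour of $\Pi_{P}$ on instances with no plan. The semantics of $\texttt{\#maximize}\,\{U:\mathit{utility}(U)\}$ retains exactly those answer sets of $\Pi_{PBase}\cup\Pi^{\circ}_{P}$ whose derived $U$ is maximal among all of them, which by the previous paragraph are precisely the answer sets whose associated plan $\pi$ satisfies $\pi\in\mathit{Opt}$. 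Restricting the bijection of the second paragraph to these answer sets proves both directions of the theorem. I expect the main obstacle to be the penalty-term bookkeeping: one must argue carefully that the multiset underlying the $\texttt{\#sum}$ in line~\ref{opt2} contains one entry of weight $c(n)$ per \emph{violated instance} $(n,s)$ --- not one per norm and not one per state alone --- which rests on the fact that distinct actions cannot start in the same state (lines~\ref{ag51}--\ref{ag52}) together with the faithful identification of an instantiated norm by its base name and activation state (Definition~\ref{insno}); the remaining steps (conservativity of the stratified extension, determinacy of the derived atoms, and finiteness giving existence of the maximum) are routine.
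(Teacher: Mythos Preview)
Your proposal is correct and follows exactly the route the paper takes: the paper's entire proof is the single sentence ``This theorem follows immediately from Theorem~\ref{theoremmainSound} and~\ref{theoremmainComp} and the structure of program $\Pi^{*}_P$,'' and your argument is precisely a careful unpacking of what that appeal to ``the structure of $\Pi^{*}_P$'' amounts to (conservative stratified extension, faithfulness of the derived $\mathit{utility}$ atom to Equation~\ref{utility}, and the semantics of \texttt{\#maximize}). Your attention to the $(n,s)$ keying in the penalty aggregate is a genuine detail the paper glosses over, but it does not constitute a different approach.
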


This theorem follows immediately from Theorem \ref{theoremmainSound} and \ref{theoremmainComp} and the structure of program $\Pi^{*}_P$.

\section{Related Work} \label{sec:related}
In this section we first review a number of architectures aimed at incorporating normative reasoning into practical reasoning which we group into BDI (Section \ref{BDI}) and non-BDI (Section \ref{NBDI}). Then in Section \ref{NRM} we examine different classifications of these architectures according to the approaches taken. In the same section we also compare these approaches with the approach proposed here. 

\subsection{BDI Architectures with Normative Reasoning}\label{BDI}

There is  a substantial body of work on the integration of norms into the BDI architecture \citep{Rao1995}, but motivation, theory and practice vary substantially.  A key assumption here is that a plan library (i.e. a set of pre-defined plans) exists and the agent uses normative considerations to choose and/or customise a provided plan, rather than generating a norm-compliant plan. We review several normative BDI frameworks and compare them with the approach proposed here.


The BOID architecture \citep{Dastani2001} extends BDI with the concept of obligation and uses agent types such as social, selfish, etc. to handle the conflicts between beliefs, desires, intentions and obligations. For example, selfish agents give priority to their desires in case of conflict, whereas social agents give priority to their obligations.  Since beliefs, desires, intentions and obligations are all represented as a set of rules, priorities are assigned to rules and subsequently used to resolve the conflict. BOID's rich rule-based language makes for a very expressive system, but it is now of largely historical interest, since the implementation is complicated~-- no reference version is currently available~-- and has a high computational complexity.

NoA \citep{Kollingbaum2005} is a normative language and agent architecture. As a language it specifies the normative concepts of obligation, prohibition and permission to regulate a specific type of agent interaction called ``supervised interaction''. As a practical reasoning agent architecture, it describes how agents select a plan from a pre-generated plan library
such that the norms imposed on the agent at each point of time are observed.  NoA agents do not have internal motivations such as goals or values that might conflict with norms, therefore the agent will always be norm compliant.  Publications on NoA do not discuss its evaluation and validate the implementation using examples, which makes the status of the implementation unclear.

$\nu$-BDI \citep{Meneguzzi2015} enables BDI agents to perform normative reasoning for the purpose of customising pre-existing plans that ensure compliance with the set of norms imposed on the agent. That said, there are mechanisms in place to allow norm violation where goal achievement would not otherwise be possible.  In contrast to BOID and NoA, much attention is paid to the practicality and computational efficiency of reasoning in $\nu$-BDI. This is evidenced by the complexity analysis of the algorithms for the norm management mechanism, complemented by empirical analysis, and which both report reasonable (sic) computational costs. 

N-2APL \citep{Alechina2012} is a norm-aware agent programming language based on 2APL \citep{Dastani2008} that supports representation of and reasoning about beliefs, goals, plans, norms, sanctions and deadlines. The N-2APL agents select plans to execute such that they fulfill obligations imposed on agents. The agent can also choose to suppress certain plans to avoid violating prohibitions.  Scheduling of plans is conducted based on plan deadlines or possible sanctions associated with the plans. The scheduling does not concern itself with construction of interleaving plans, thus scheduling boils down to sequencing of plans. Norms in N-2APL are quite simple, being either obligations to carry out and prohibitions not to carry out a specified action. The norms are not conditional (i.e. there is no activation condition defined that triggers the norm) and unlike obligations the prohibitions do not have deadlines.

N-Jason \citep{DBLP:conf/dalt/LeePLDA14} sets out an extension of the Jason \citep{Bordini2007} variant of the BDI architecture to account for norms in plan selection and to handle priorities in plan scheduling.  Like N-2APL, N-Jason enables the underlying implementation of the BDI architecture to carry out norm-aware deliberation and provides a run-time norm execution mechanism, through which new unknown norms~-- as long as they pertain to known actions~-- are recognized and can trigger plans.  To be able to process a norm such as an obligation, which includes a deadline, the agent architecture must be able to reason about deadlines and priorities, and choose between plans triggered by a particular norm. Consequently, N-Jason extends the syntax of the plan library to allow priority annotation, and the scheduling algorithm of AgentSpeak(RT) to operate in the context of Jason/AgentSpeak(L) and 
provide ``real-time agency''.

\subsection{Non-BDI Architectures with Normative Reasoning}\label{NBDI}

A second smaller group of research work considers the problem either from a non-BDI perspective or are agent-architecture agnostic.  One result of not being tied to BDI is not necessarily relying on a pre-generated plan library, which raises issues of how to generate norm-compliant plans.

\cite{Wamberto2011} do utilise a pre-generated plan, like the list above, but take norms into consideration when deciding how to execute the plan with respect to the norms triggered by that plan.  Specifically, the approach aims to adjust the chosen plan to account for the norms that govern the plan actions at each point in time, where the norm expresses constraints on the values that can be assigned to variables in a plan action. The adjustments of values in actions specify how the agent should execute a plan, such that the cost of violated norms is outweighed by the reward from norm compliance. The most preferred plan is therefore the one that maximises this metric.

\cite{Sofia2012} take norms into account in plan generation where the planning problem is specified in PDDL 2.1 \citep{Fox2003}. The normative state of the agent is checked, using the planning tool Metric-FF \citep{Hoffmann2001}, after each individual action; then the planner decides if the agent should comply with a norm or not based on a linear cost function specified in terms of constraints on the states achieved during each plan. Although this mechanism enables an agent to cope with the dynamics of operating in an open environment, checking the normative position of an agent after each action imposes a high computational cost on the plan generation phase.


\cite{Shams2015b} define an approach to practical reasoning that considers norms in both plan generation and plan selection. The agent attempts to satisfy a set of potentially conflicting goals in the presence of norms, as opposed to conventional planning problems that generate plans for a single goal. The main contributions of \cite{Shams2015b} are \begin{inparaenum}[(i)] \item the introduction of an enforcement approach that is a combination of utility-based and pressure-based compliance methods \citep{Lopez2005}, and \item formalising the conflicts between goals, between norms, and between goals and norms. \end{inparaenum}
There is a penalty cost for norm violation regardless of the existence of conflict.  Whenever a norm is triggered, outcomes of norm compliance and violation, and their impacts on the hinderance or facilitation of other goals and norms, are both generated and compared by utility.  In those cases where there are no conflicts and no goals or norms are hindered by the punishment of violation, the loss of utility drives the agent towards compliance. Plans are selected based on the comparison of the utility of the goals satisfied and cost of norms violated in the entire plan.

\subsection{Normative Reasoning Mechanisms}\label{NRM}

We now consider the approaches accounting for norms in practical reasoning in order to uncover similarities across architectures.
Not surprisingly, much work stems from planning and how to take account of norms in the plan identification or construction process.  There are broadly three approaches:

\begin{compactenum}
\item 
Choosing a plan that is norm compliant (e.g., NoA \citep{Kollingbaum2005}), which is a one-off process, that may fail delivering the best (where ``best'' can be defined in various ways) plan available for the situation from those available, and which requires starting again when a plan step fails and the remainder of the plan is invalidated. The main points of difference between NoA and the work presented here are that
\begin{inparaenum}[(i)]
\item NoA agents are BDI specific, 
\item they do not have internal motivations such as goals or values that might conflict with norms, which therefore enables the NoA agent to always comply with norms
\item plans are pre-existing rather than generated.
\end{inparaenum}

\item 
Customising a plan to make it norm compliant (e.g., \citep{Wamberto2011}) is potentially more flexible in making use of available plans (also helping customize existing plans into optimal norm-compliant plans), but otherwise has the same replanning drawback.  In common with \cite{Wamberto2011}, we use the utility of the entire plan in the selection process, but differ in that we generate plans rather than use plans from a library.

\item Generating a plan that is norm compliant (e.g., \citep{Sofia2012,Shams2015b}).  The former addresses on-going compliance and re-planning, putting a high computational overhead on each plan step.  Of necessity, \cite{Sofia2012} can only compute utility on a step-by-step basis, whereas we consider the utility of the whole plan.  \cite{Shams2015b} attempt to balance compliance on the part of the agent (where the agent chooses a norm-compliant action in preference) with enforcement 
(where the agent is discouraged from non-norm-compliance via punishments for norm violation), but is not robust to plan failure. 
Furthermore, in \cite{Shams2015b}, conflict is formulated in advance by taking a static view about conflicts. 
For instance, two goals that are logically inconsistent, cannot be satisfied in the same plan, regardless of the order or choice of actions in a plan. In contrast, in the work presented here, conflicts are not formulated in advance; instead, they are inferred from plans. Therefore, the agent might be able to schedule its actions such that two goals that are logically inconsistent are satisfied in the same plan at different points in time. 
As discussed in Section \ref{semantics}, the norm representation is extended to accommodate compliance and violation in the presence of durative actions more flexibly by allowing compliance to be defined as the start or end of the action that is the subject of the norm.
\end{compactenum}

\begin{table}
\centering
\caption{Summary of Related Frameworks}
\label{sum}
\scriptsize
\begin{tabular}{lccc}
\hline \\
Framework &  Deontic Operator &  Activation Condition & De-activation Condition \\
\hline\\
BOID \citep{Dastani2001} & o & N/A & N/A\\
NoA \citep{Kollingbaum2005} &  o, f, p& state, action & state, action \\
$\nu$-BDI \citep{Meneguzzi2015} &  o, f&  state & state\\
N-2APL \citep{Alechina2012}&  o, f& state  & state\tablefootnote{The de-activation condition only applies to obligations. Prohibitions do not have such a condition.}\\
N-Jason \citep{DBLP:conf/dalt/LeePLDA14} &  o, p, w\tablefootnote{Operator w stands for power and indicates the capability of doing something in prohibitive societies, where actions are not allowed unless empowered and explicitly permitted.}& N/A & temporal constraint\\
\cite{Wamberto2011} & o, f &  N/A & N/A\\
\cite{Sofia2012} &  o, f& state & state \\
\cite{Shams2015b} &  o, f& action & temporal constraint \\
This work & o, f & action & temporal constraint \\
\hline
\end{tabular}
\end{table}

Table \ref{sum} 
shows a summary of related framework to the framework proposed in this paper. For the current work, the parameters compared are the same as \cite{Shams2015b}.
The majority of frameworks, including our framework, deal with obligations and prohibitions. Activation condition, however, varies in those that do support conditional norms. An activation condition presented as an action  can be expressed as a state that satisfies the post-conditions of the action. Unlike many related frameworks, we have exploited the explicit representation of time in formal model to  encode the norm de-activation condition as a time instant. As discussed in Section \ref{norm}, associating a deadline with temporal properties is considered to be realistic and dynamic, in particular when the norms capture the requirements of real-world scenarios \citep{Chesani2013,Kafali2014,Gasparini2015}, such as the disaster scenario we have modelled in this paper. Another important differentiation point between our work and the related ones is that our model is capable of handling durative actions and their execution concurrently, as well as dealing with norm compliance and violation in the presence of durative actions.


\section{Conclusions and Future Work} \label{sec:conclusion}

An agent performing practical reasoning in an environment regulated by norms constantly needs to weigh up the importance of goals to be satisfied and norms with which to comply. This decision process is only possible when the agent has access to the set of all possible plans available and the agent can ascertain the impact of its decision on entire plans. This research  offers means to capture and measure the impact via utility functions, offering numeric metrics, so that the decision problem can be reformulated as choosing a plan from a set of generated plans, which maximises its overall utility. While the literature we have surveyed contains practical reasoning frameworks that take into account normative considerations, they are limited in several ways, and we have contrasted them with our approach in this article. 

The majority of these frameworks are limited to a specific type of agent architecture, mostly BDI, (e.g., \citep{Dastani2001, Kollingbaum2005, Meneguzzi2015}). In our research we do not assume any specific architecture. Instead, we adopt a realistic view that agents have capabilities (encoded as the actions they may perform), and they have internal motivations (encoded as the goals of a planning problem). This leaves the option of extending current work to a multi-agent setting where agents might not necessarily have the same architecture.

In the approaches set out in the literature the attitude agents have towards norms is often one of compliance, meaning that their plans are often selected or, in some approaches, customised, to ensure norm compliance, (e.g., \citep{Kollingbaum2005,Alechina2012,Wamberto2011}). We argue that in certain situations an agent might be better off violating a norm which, if followed, would make it impossible for the agent to achieve an important goal or complying with a more important norm; we enable agents to compare the relative importance of goals and norms via their utilities. Consequently, through our practical reasoning process agents consider {\em all\/} plans (i.e., sequences of actions), including those leading to norm compliance and violation; each plan gets an associated overall utility for its sequences of actions, and norms followed/violated, and agents can decide which of them to pursue. The plan an agent chooses to follow is not necessarily norm-compliant, however, our mechanism guarantees that the decision will maximise the overall plan/norm utility, which justifies the occasional violation of norms as the plan is followed.

We see several interesting avenues for future work. Our research currently addresses normative practical reasoning in a single-agent setting, extending to a multi-agent setting seems a natural next step. This idea can be explored both when the agents are collaborating to fulfill common goals, as well as when they are competing to use resources to fulfill their individual goals. In the former case, the best course of action can be identified as one that maximises the overall utility of the system. In the latter, game-theoretic approaches can be utilised to identify a solution that ensure relative optimality for the individuals (e.g. \citet{Agotnes2007}). Another possibility to explore in a multi-agent setting is to infer conflicts between goals, between norms and between goals and norms by analysing the overall set of possible plans. The inferred conflicts can guide the process of re-engineering of the system toward a more social and norm compliant system (e.g. \citep{Purvis2013}).

We note the relative limitations of our norm representation. Although our approach addressed action-based norms, we envisage how it can be extended and adapted to handle state-based norms. Our Def.~\ref{def:norms} needs to cater for formulae to represent both the norm activation condition, $a_{con}$, and the norm subject, $a_{sub}$, instead of actions. A combination of action- and state-based norms (e.g. \citet{DeVos2013}) enriches the norm representation as well as normative reasoning. Also, the norm representation language can be extended to cater for deadlines that are expressed as reaching a state\footnote{In such cases the deadline is referred to as a norm termination condition.} rather than a time instance. For instance, an obligation to open a dam on a river can come in force when the water level is above a certain point, and subsequently  terminated when the water level drops below a certain level, regardless of how long it takes for that to happen. We would also like to include permission norms in addition to obligations and prohibitions. The modelling of permissions as exceptions to obligations and prohibitions has been used to justify violations under specific circumstances, (e.g. \cite{2Luck2010, Criado2012}). It is also used as a means to handle the uncertainty and  incompleteness of the knowledge of the environment the agents operate in \citep{Alrawagfeh2014}.

Finally, our implementation should be seen as a proof-of-concept that, apart from replanning, provides a provable computational realisation of all aspects of the formal model. In future, we aim at extending the implementation to accommodate replanning when a plan in progress is interrupted for any reason. 
The formal model is implementation language neutral so other implementation languages could be used.

\appendix
\section{Formulation of the Disaster Scenario} \label{ForScen}

We provide a formalisation of the scenario set out in Section \ref{sec:scenario}.
Let $P = (\mathit{FL}, \Delta, A, G, N )$ be the normative practical reasoning problem for the disaster scenario such that:
\begin{itemize}
\item $\mathit{FL}=
\left\{
\begin{array}{c}
\mathit{shockDetected}, \mathit{poisonDetected}, \mathit{waterSupplied}, \\
\mathit{areaSecured}, \mathit{evacuated}, \mathit{shockDetected}, \\
\mathit{shelterBuilt}, \mathit{populated}, \mathit{wounded}, \\
\mathit{earthquakeDetected}, \mathit{medicineSupplied}, \\
\mathit{noAccess}, \mathit{medicsPresent}
\end{array}
\right\}$

\item $\Delta=
\left\{
\begin{array}{c}
\mathit{earthquakeDetected}, \mathit{medicsPresent}, \\
\mathit{wounded}, \mathit{populated}, \mathit{waterSupplied}
\end{array}
\right\}$

\item $A=
\left\{
\begin{array}{c}
\mathit{detectShock}, \mathit{detectPoison}, \mathit{stopWater}, \\
\mathit{buildShelter}, \mathit{evacuate}, \mathit{getMedicine}, \mathit{secure}
\end{array}
\right\}$ where

$\mathit{detectShock} = \langle \{\},\{\mathit{shockDetected}\}, 1\rangle.$

$\mathit{detectPoison} = \langle \{\},\{\mathit{poisonDetected}\}, 1\rangle.$

$\mathit{stopWater} = 
\left\langle 
\left\{
\begin{array}{c}
\mathit{poisonDetected}, \\
\mathit{waterSupplied}
\end{array}
\right\}, \{\neg waterSupplied\},1
\right\rangle$

$\mathit{buildShelter} = 
\left\langle 
\left\{
\begin{array}{c}
\mathit{areaSecured}, \\
\mathit{evacuated},\\
\neg \mathit{shockDetected} 
\end{array}
\right\},
\left\{
\begin{array}{c}
\mathit{shelterBuilt}, \\
\neg \mathit{evacuated}
\end{array}
\right\}, 4
\right\rangle$

$\mathit{evacuate} = 
\left\langle 
\left\{
\begin{array}{c}
\mathit{shockDetected}, \\
\mathit{populated}
\end{array}
\right\},
\left\{
\begin{array}{c}
\mathit{evacuated}, \\
\neg \mathit{populated}
\end{array}
\right\},5
\right\rangle$

$\mathit{getMedicine} = 
\left\langle 
\left\{
\begin{array}{c}
\mathit{earthquakeDetected}, \\
\mathit{wounded}
\end{array}
\right\},\{\mathit{medicine}\},3
\right\rangle$

$\mathit{secure} = 
\left\langle 
\{\mathit{evacuated}\},
\left\{
\begin{array}{c}
\mathit{areaSecured},\\
\mathit{noAccess}
\end{array}
\right\},3
\right\rangle$


\item $G=\{\mathit{runningHospital}, \mathit{organiseSurvivorCamp}\}$, where:

$\mathit{runningHospital} =
\left\langle 
\left\{
\begin{array}{c}
\mathit{medicsPresent}, \\
\mathit{waterSupplied}, \\
\mathit{medicineSupplied}
\end{array}
\right\}, 25 
\right\rangle$

$\mathit{organiseSurvivorCamp} =
\left\langle 
\left\{
\begin{array}{c}
\mathit{areaSecured}, \\
\mathit{shelterBuilt}
\end{array}
\right\}, 18 
\right\rangle$

\item $N=\{n_{1}, n_{2}\}$, where:

$n_{1} = \langle f, \mathit{detectShock}, \mathit{buildShelter}, 3, 5 \rangle$

$n_{2} = \langle o, \mathit{detectPoison}, \mathit{stopWater},2, 10\rangle$

\end{itemize}

\section{Mapping of Our Disaster Scenario}\label{LpScen}
The formal specification of our disaster scenario (Section\ref{sec:scenario}) as provided in the previous section can be mapped to its corresponding \AnsProlog program following the rules given in Figure \ref{mapping} on page \pageref{mapping}. The corresponding program is shown in Figures \ref{lp1}-\ref{lp5}. Optimisation conditions are shown in Figure \ref{opt}. The visualisation of the three answer sets of the program is displayed in figures \ref{vis1}--\ref{vis3}, where arcs show the actions in progress and the boxes below each state, show the fluents that hold in that state. The fluents in bold are the fluents that are added to the state, while the crossed fluents are the terminated ones. Norms violated in a state are highlighted in red and goals satisfied are highlighted in green. Applying the optimisation statements  in Figure \ref{opt}, the utility of each plan presented by each answer set is calculated as below, making the plan presented by answer set 3 the optimal plan.

\begin{description}
\item \textbf{Utility of plan presented by answer set 1, Figure \ref{vis1}:}\\
$Utility(\pi_{Ans1}) = v(runningHospital)=25$
\item \textbf{Utility of plan presented by answer set 2, Figure \ref{vis2}:}\\
$Utility(\pi_{Ans2}) = v(runningHospital) + v(organiseSurvivorCamp) - c(n_{1}) - c(n_{1})=25+18-5-5=33$
\item \textbf{Utility of plan presented by answer set 3, Figure \ref{vis3}:}\\
$Utility(\pi_{Ans3}) = v(runningHospital) + v(organiseSurvivorCamp)=43$
\end{description}

\begin{figure}
\begin{minipage}{0.99\textwidth}
\footnotesize
\begin{center}
\lstinputlisting[xleftmargin=2em,linerange={1-35}]
{scenarioLP.txt}
\end{center}
\end{minipage}
\caption{The Corresponding \AnsProlog of The Disaster Scenario Part 1}\label{lp1}
\end{figure}

\begin{figure}
\begin{minipage}{0.99\textwidth}
\footnotesize
\begin{center}
\lstinputlisting[xleftmargin=2em,firstnumber=last,linerange={37-73}]
{scenarioLP.txt}
\end{center}
\end{minipage}
\caption{The Corresponding \AnsProlog of The Disaster Scenario Part 2}\label{lp2}
\end{figure}

\begin{figure}
\begin{minipage}{0.99\textwidth}
\footnotesize
\begin{center}
\lstinputlisting[xleftmargin=2em,firstnumber=last,linerange={75-110}]
{scenarioLP.txt}
\end{center}
\end{minipage}
\caption{The Corresponding \AnsProlog of The Disaster Scenario Part 3}\label{lp3}
\end{figure}

\begin{figure}
\begin{minipage}{0.99\textwidth}
\footnotesize
\begin{center}
\lstinputlisting[xleftmargin=2em,firstnumber=last,linerange={111-146}]
{scenarioLP.txt}
\end{center}
\end{minipage}
\caption{The Corresponding \AnsProlog of The Disaster Scenario Part 4}\label{lp4}
\end{figure}

\begin{figure}
\begin{minipage}{0.99\textwidth}
\footnotesize
\begin{center}
\lstinputlisting[xleftmargin=2em,firstnumber=last,linerange={147-166}]
{scenarioLP.txt}
\end{center}
\end{minipage}
\caption{The Corresponding \AnsProlog of The Disaster Scenario Part 5}\label{lp5}
\end{figure}

\begin{figure}
\begin{minipage}{0.99\textwidth}
\footnotesize
\begin{center}
\lstinputlisting[xleftmargin=2em,linerange={168-180}]
{scenarioLP.txt}
\end{center}
\end{minipage}
\caption{Optimisation Rules for The Disaster Scenario}\label{opt}
\end{figure}

\begin{sidewaysfigure}
\includegraphics[scale=1.5]{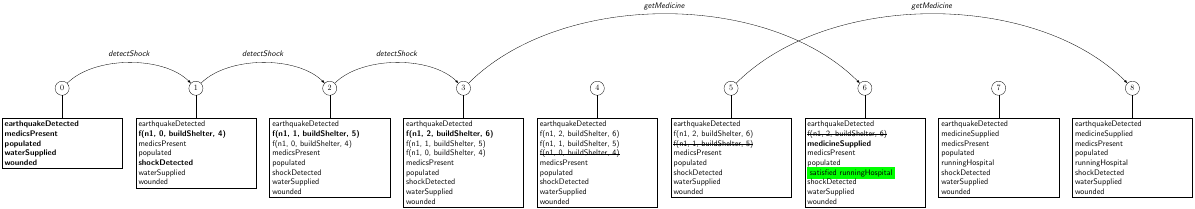}
\vspace{4 mm}\\
\includegraphics[scale=1.5]{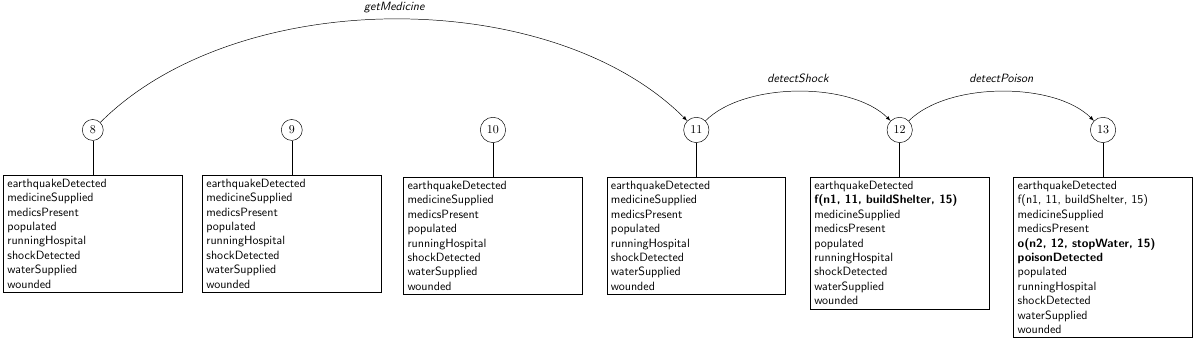}
\caption{Visualisation of answer set 1}
\label{vis1}
\end{sidewaysfigure}

\begin{sidewaysfigure}
\includegraphics[scale=1.5]{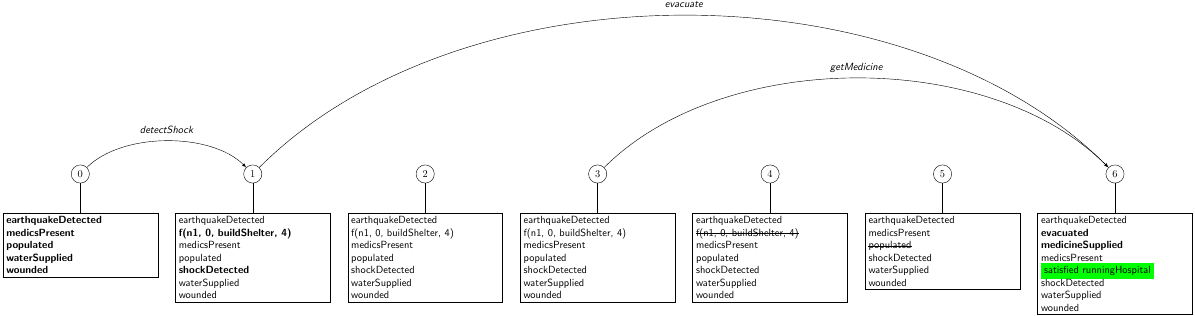}
\vspace{4 mm}\\
\includegraphics[scale=1.5]{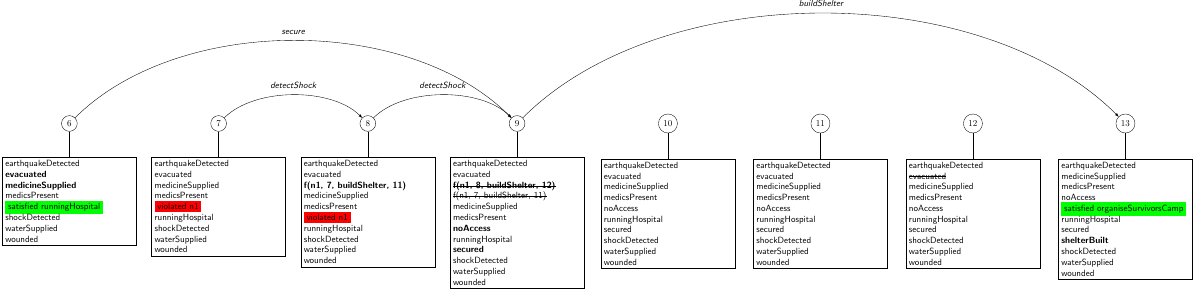}
\caption{Visualisation of answer set 2}
\label{vis2}
\end{sidewaysfigure}

\begin{sidewaysfigure}
\includegraphics[scale=1.5]{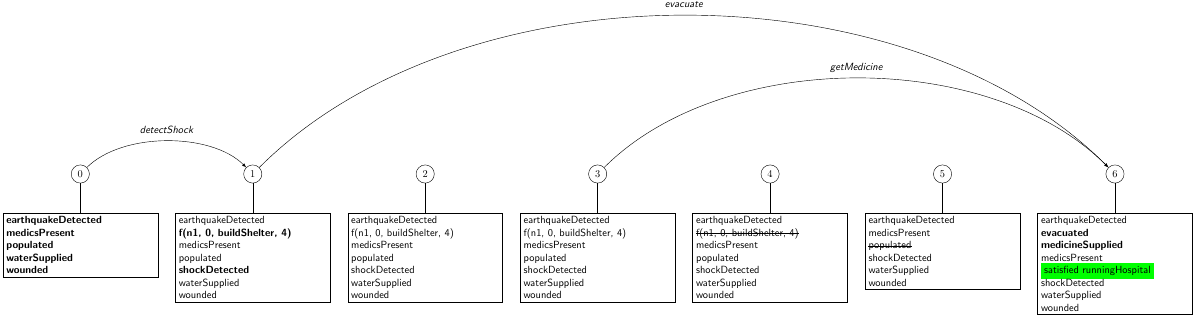}
\vspace{4 mm}\\
\includegraphics[scale=1.5]{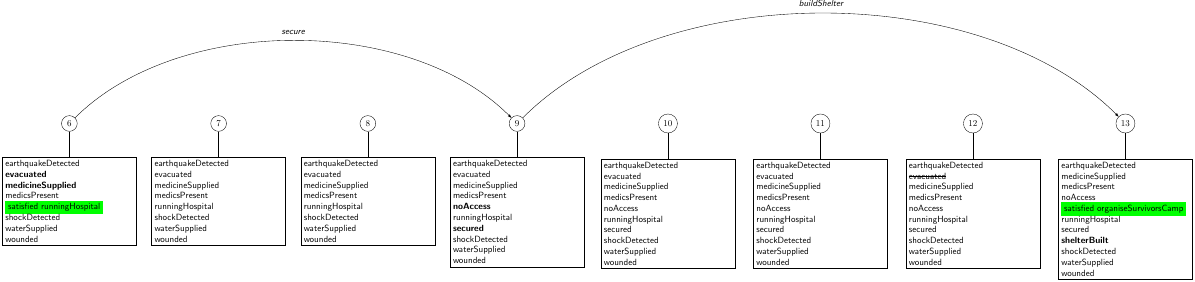}
\caption{Visualisation of answer set 3}
\label{vis3}
\end{sidewaysfigure}

\newpage
\section{Proof of Theorem \ref{theoremmainSound}} \label{proofSound}

\begin{theorem*}[Soundness]
Let $P = (\mathit{FL}, I, A, G, N)$ be a normative practical reasoning problem with $\Pi_{PBase}$ as its corresponding \AnsProlog program. Let $Ans$ be an answer set of
$\Pi_{PBase}$, then a set of atoms of the form $executed(a_{i},$ $t_{a_{i}}) \in Ans$ encodes a solution
to $P$. 
\end{theorem*}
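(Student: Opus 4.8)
The plan is to show that the sequence of actions read off from the answer set satisfies every clause of the definition of a plan (Definition~\ref{plandef}). Given an answer set $Ans$ of $\Pi_{PBase}$, let $\pi = \langle (a_0,0),\ldots,(a_n,t_{a_n})\rangle$ consist of the pairs with $executed(a_i,t_{a_i}) \in Ans$ (which, by the constraint in lines~\ref{ag51}--\ref{ag52}, have pairwise distinct times and can thus be listed in increasing order of execution time). Define the associated states by $s_k = \{\, fl \in \mathit{FL} \mid holdsat(fl,k) \in Ans \,\}$ for $0 \le k \le m$, where $m = \mathit{Makespan}(\pi)$. The norm-tracking rules of Figure~\ref{mapping} only introduce auxiliary $holdsat$ and $terminated$ atoms over syntactically distinct compound terms, so they do not affect the domain-fluent states nor any clause of Definition~\ref{plandef} and may be ignored here. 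The first and central task is to prove the lemma that $S(\pi) = \langle s_0,\ldots,s_m\rangle$ is exactly the state sequence that Definition~\ref{trans} assigns to $\pi$ from the initial state $\Delta$.

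\textbf{The state-sequence lemma (the main obstacle).} I expect essentially all the work to be here. The argument is by induction on $k$, exploiting minimality of answer sets: $holdsat(fl,k) \in Ans$ iff it lies in the least model of the reduct of $\Pi_{PBase}$ with respect to $Ans$, hence iff it is supported by some reduct rule. For $k = 0$, the only rules with head $holdsat(\cdot,0)$ are the facts of line~\ref{initial} --- the inertia rule (lines~\ref{ag101}--\ref{ag102}) and the add-postcondition rule (lines~\ref{ag81}--\ref{ag82}) produce $holdsat$ atoms only at states $\ge 1$, since $d > 0$ in Definition~\ref{acts} --- so $s_0 = \Delta$, which is the first clause of Definition~\ref{plandef}. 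For the inductive step, $holdsat(fl,k)$ is supported either (i) by inertia, requiring $holdsat(fl,k-1) \in Ans$ and $terminated(fl,k-1) \notin Ans$, or (ii) as an add-postcondition, requiring some $a$ with $fl \in ps(a)^+$ and $executed(a,k-d_a) \in Ans$, i.e.\ $a \in A_k$; and $terminated(fl,k-1) \in Ans$ iff some $a \in A_k$ has $fl \in ps(a)^-$ (lines~\ref{ag91}--\ref{ag92}), the deliberate off-by-one ensuring a deleted fluent fails to persist into $s_k$. Matching this against the case split of Equation~\ref{transition} yields $s_k = (s_{k-1} \setminus \bigcup_{a \in A_k} ps(a)^-) \cup \bigcup_{a \in A_k} ps(a)^+$. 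Well-definedness of $s_k$ is guaranteed by the remark preceding Definition~\ref{trans}: actions with contradictory postconditions form a pair in $\mathit{cf}_{action}$ and so (by lines~\ref{ag71}--\ref{ag72}, see below) cannot be in progress together, hence cannot both lie in $A_k$. I would also record here that, via the $EX$ shorthand, line~\ref{ag2} makes $pre(a,S) \in Ans \iff s_S \models pr(a)$, and line~\ref{goal1} makes $satisfied(g,S) \in Ans \iff s_S \models r(g)$.

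\textbf{Discharging the remaining plan clauses.} Each is forced by a single constraint. Distinctness of start times (so that $\pi$ is a well-formed action sequence with strictly increasing times) comes from the constraint in lines~\ref{ag51}--\ref{ag52}. Absence of a concurrency conflict --- no $(a_i,t_{a_i}),(a_j,t_{a_j})$ in $\pi$ with overlapping execution intervals and $(a_i,a_j)\in\mathit{cf}_{action}$ --- follows from the constraints in lines~\ref{ag71}--\ref{ag72} (one per conflicting pair) together with the definition of $inprog$ in lines~\ref{ag41}--\ref{ag42}: overlapping intervals would put both actions in progress in a common state. The invariant-precondition clause, $\forall k \in [t_{a_i}, t_{a_i}+d_{a_i})\; s_k \models pr(a_i)$, follows from the constraint in line~\ref{ag4}: since $executed(a_i,t_{a_i})\in Ans$ forces $inprog(a_i,k)\in Ans$ for each such $k$, the constraint forces $pre(a_i,k)\in Ans$, i.e.\ $s_k \models pr(a_i)$ by the lemma. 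Finally, $G_\pi \ne \emptyset$ follows from the constraint in line~\ref{p1}, which excludes answer sets with no $satisfied(g_j)$, together with line~\ref{goal2} and line~\ref{goal1}: some goal $g$ holds in some $s_k$, so $g \in G_\pi$. Assembling these facts, $\pi$ meets Definition~\ref{plandef}, i.e.\ the atoms $executed(a_i,t_{a_i}) \in Ans$ encode a solution to $P$, as required.
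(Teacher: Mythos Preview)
Your proposal is correct and follows the same overall structure as the paper's proof: both verify, clause by clause, that the action sequence extracted from an answer set satisfies Definition~\ref{plandef}, pointing at the relevant lines of Figure~\ref{mapping} for each clause (line~\ref{initial} for $s_0=\Delta$, line~\ref{ag4} for the invariant preconditions, line~\ref{p1} for $G_\pi\neq\emptyset$, lines~\ref{ag71}--\ref{ag72} for the absence of concurrent conflicting actions).

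The one substantive difference is that you make explicit, and prove by induction using answer-set minimality, the state-sequence lemma that the $holdsat$ atoms in $Ans$ coincide with the states $S(\pi)$ of Definition~\ref{trans}. The paper's proof takes this correspondence for granted: it simply asserts that lines~\ref{ag81}--\ref{ag92} and~\ref{ag101}--\ref{ag102} implement the state transition, without arguing that no spurious $holdsat$ atoms can appear (the direction that needs minimality). Your version therefore closes a gap the paper leaves open, at the cost of a longer argument; the paper's version is shorter but relies on the reader trusting that the inertia/add/delete rules faithfully realise Equation~\ref{transition}. Your observation that the normative-fluent rules operate over syntactically disjoint compound terms and so cannot interfere with domain fluents is also a point the paper does not articulate.
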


\begin{proof}
We need to prove that program $\Pi_{PBase}$ (Figure \ref{mapping}) generates all 
sequences of actions that meet the criteria that identifies a sequence of actions as a plan, as defined in Definition \ref{plandef}. This implies that the sequence of actions that is a part of the answer set 
satisfies all the criteria to be a solution to the encoded planning program. 

Actions and more precisely the postconditions of actions are what cause the change 
from one state to another one. Line \ref{ag3} generates all sequences of actions. Lines \ref{ag81}--\ref{ag82} changes a state in which some actions end by adding the add postconditions of those actions to the state. In contrast, Lines \ref{ag91} and \ref{ag92} terminate the delete postconditions of actions ending in the next state such that those postconditions do not hold in the following state. If there is no action ending in a state, the state remains unchanged 
as all the fluents are inertial and they hold in the next state unless they are terminated (Line \ref{ag102}). A sequence of actions $\pi =\langle (a_{0},0),$ $\ldots, (a_{n},t_{a_{n}}) \rangle \mbox{ s.t. } \nexists$ $(a_{i},t_{a_{i}}), (a_{j},t_{a_{j}}) \;\widehat{\in}\; \pi \mbox{ s.t. } t_{a_{i}}  \leq t_{a_{j}} < t_{a_{i}}+d_{a_{i}}, (a_{i}, a_{j}) \in \mathit{cf}_{action}$ is a plan for the normative planning problem $P = ( F\!L, \Delta, A, G, N )$ 
if the following conditions hold: 
\begin{compactitem}
 \item fluents in $\Delta$ (and only those fluents) hold in the initial state: $s_{0} = \Delta $.\\
  Line \ref{initial} ensures that all fluents in $\Delta$
 are added to the initial state $s_{0}$. 
 \item the preconditions of action $a_{i}$ hold at time $t_{a_{i}}$ 
 and throughout the execution of $a_{i}$:
 $\forall k \in [t_{a_{i}}, t_{a_{i}}+d_{a_{i}}), s_{k} \models pr(a_{i})$.\\
 Lines \ref{ag4} guarantees that the preconditions of an action hold all 
through its execution. 
 \item plan $\pi$ satisfies a non-empty subset of goals:
$G_{\pi} \neq \emptyset$.\\
Line \ref{p1} indicates that a non-empty subset of goals has to be satisfied in a 
 plan.
 \end{compactitem}
\end{proof}

\section{Proof of Theorem \ref{theoremmainComp}} \label{proofComp}
\begin{theorem*}[Completeness]
Let  $\pi=\langle (a_{0},0),\cdots,(a_{n},t_{a_{n}})\rangle$ be a  plan for 
$P = ( F\!L, I, A, G, N )$. Then, there exists an 
answer set of $\Pi_{PBase}$ containing atoms $executed(a_{i},t_{a_{i}})$ that corresponds to $\pi$. 
\end{theorem*}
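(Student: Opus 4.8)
The plan is to prove completeness in the standard way for answer-set encodings: given a plan $\pi=\langle (a_0,0),\ldots,(a_n,t_{a_n})\rangle$ for $P=(\mathit{FL},I,A,G,N)$ (assuming, as throughout, that $q\ge \mathit{Makespan}(\pi)$ so that $\pi$ fits within the available states), I would exhibit a single set $M$ of ground atoms, show that $M$ contains exactly the atoms $\mathit{executed}(a_i,t_{a_i})$ for the pairs occurring in $\pi$, and then verify that $M$ is an answer set of $\Pi_{\mathit{PBase}}$. Following the remark preceding the theorem, the verification proceeds by forming the reduct $\Pi_{\mathit{PBase}}^{M}$ — the transformed program without choice rules and without negation-as-failure — and showing that $M$ is its $\subseteq$-minimal model.

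\textbf{Construction of $M$.} I would build $M$ in layers. The program forces, independently of $\pi$, the facts $\mathit{state}(k)$ for $k\in[0,q]$ (line \ref{time1}), $\mathit{holdsat}(x,0)$ for $x\in I$ (line \ref{initial}), $\mathit{action}(a,d)$ for every $a=\langle \mathit{pr},\mathit{ps},d\rangle\in A$ (line \ref{ag1}), $\mathit{goal}(g,v)$ for $g\in G$ (line \ref{goal0}), and $\mathit{norm}(n,c)$ for $n\in N$ (line \ref{norm1}); put all of these in $M$, and add $\mathit{executed}(a_i,t_{a_i})$ for each pair in $\pi$. Then close deterministically: $\mathit{inprog}(a_i,k)$ for $k\in[t_{a_i},t_{a_i}+d_{a_i})$ (lines \ref{ag41}--\ref{ag42}); the fluent atoms $\mathit{holdsat}(f,k)$ chosen so that $\{\,f\mid \mathit{holdsat}(f,k)\in M\,\}=s_k$, where $\langle s_0,\ldots,s_m\rangle=S(\pi)$ is the state sequence of Def.~\ref{trans}, together with $\mathit{terminated}(f,k-1)$ whenever $f\in\mathit{ps}(a)^{-}$ for some $a\in A_k$ (lines \ref{ag81}--\ref{ag92}); $\mathit{pre}(a,k)$ precisely when $s_k\models\mathit{pr}(a)$ (line \ref{ag2}); $\mathit{satisfied}(g,k)$ precisely when $s_k\models r(g)$, and $\mathit{satisfied}(g)$ when this holds at some $k$ (lines \ref{goal1}, \ref{goal2}); and the normative fluents $\mathit{holdsat}(o(\cdot),\cdot)$, $\mathit{holdsat}(f(\cdot),\cdot)$, $\mathit{cmp}(\cdot)$, $\mathit{vol}(\cdot)$ exactly as prescribed by lines \ref{norm21}--\ref{norm132}. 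The bridging lemma, proved by induction on $k$, is that the fluent fragment of $M$ at each state equals $s_k$; the inductive step mirrors the case split of Equation~\ref{transition} on whether $A_k=\emptyset$, using the inertia rule (lines \ref{ag101}--\ref{ag102}), the add-postcondition rule (lines \ref{ag81}--\ref{ag82}), and the terminate rule (lines \ref{ag91}--\ref{ag92}), whose off-by-one convention is exactly what makes a delete effect take hold in $s_k$ rather than $s_{k-1}$.

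\textbf{$M$ is an answer set.} First, $M$ satisfies every integrity constraint of $\Pi_{\mathit{PBase}}$, and this is precisely where the hypothesis that $\pi$ is a plan (Def.~\ref{plandef}) enters: the concurrency-conflict constraint (lines \ref{ag71}--\ref{ag72}) holds because $\pi$ contains no overlapping pair $(a_i,t_{a_i}),(a_j,t_{a_j})$ with $(a_i,a_j)\in\mathit{cf}_{action}$; the precondition-persistence constraint (line \ref{ag4}) holds because $s_k\models\mathit{pr}(a_i)$ for all $k\in[t_{a_i},t_{a_i}+d_{a_i})$; the no-simultaneous-start constraint (lines \ref{ag51}--\ref{ag52}) holds because the time-stamps in $\pi$ are pairwise distinct; and the goal constraint (line \ref{p1}) holds because $G_\pi\neq\emptyset$. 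Next I would form $\Pi_{\mathit{PBase}}^{M}$: the choice rule in line \ref{ag3} contributes, for each pair $(a_i,t_{a_i})\in\pi$, a definite rule with head $\mathit{executed}(a_i,t_{a_i})$ and body $\mathit{action}(a_i,d_{a_i}),\mathit{state}(t_{a_i})$, and nothing for pairs not in $M$; every rule whose body contains a literal $\mathrm{not}\,\varphi$ with $\varphi\in M$ is deleted, and the remaining $\mathrm{not}$-literals are erased. A rule-by-rule check shows $M$ is a model of $\Pi_{\mathit{PBase}}^{M}$, and a level-mapping argument following the temporal structure gives minimality: the base facts and the $\mathit{holdsat}(\cdot,0)$ atoms lie in every model, the $\mathit{executed}$ atoms lie in every model because of the definite rules just described, and, by induction on the state index $k$, each $\mathit{inprog}$, $\mathit{pre}$, $\mathit{holdsat}$, $\mathit{terminated}$, normative-fluent, $\mathit{cmp}$, $\mathit{vol}$, and $\mathit{satisfied}$ atom of level $k$ is the head of a reduct rule whose body atoms have already been forced. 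Hence $M$ is the minimal model of the reduct, so an answer set of $\Pi_{\mathit{PBase}}$, and it contains exactly the $\mathit{executed}$ atoms encoding $\pi$.

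I expect the main obstacle to be the bridging lemma together with the accompanying normative bookkeeping. Inertia makes $\mathit{holdsat}(f,k)$ depend on $\mathit{holdsat}(f,k-1)$ and on $\mathrm{not}\,\mathit{terminated}(f,k-1)$, so the induction must be staged carefully at each step — first fixing the $\mathit{terminated}$ atoms produced by the actions in $A_k$, then the $\mathit{holdsat}$ atoms at $k$ — to avoid any spurious circularity, and one must confirm that the convention of line \ref{ag92} (delete effects terminated in the state \emph{before} the action's end) makes the ASP transition coincide with Def.~\ref{trans}. The normative fluents add a second, structurally identical layer: activation at the end of $a_{con}$ (lines \ref{norm21}--\ref{norm22}, \ref{norm81}--\ref{norm82}), persistence across the compliance window by inertia, and termination upon detection of $\mathit{cmp}$ or $\mathit{vol}$ (lines \ref{norm51}--\ref{norm52}, \ref{norm71}--\ref{norm72}, \ref{norm110}--\ref{norm111}, \ref{norm131}--\ref{norm132}); one has to check that every instantiated obligation or prohibition in $N_\pi$ is terminated by its deadline, i.e.\ that the partition $N_\pi=N_{cmp(\pi)}\cup N_{vol(\pi)}$ is reflected faithfully, which relies on the standing assumption that norm deadlines do not exceed $m=\mathit{Makespan}(\pi)$.
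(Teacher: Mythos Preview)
Your proposal is correct and follows essentially the same strategy as the paper: construct a candidate interpretation $M$ from $\pi$, form the Gelfond--Lifschitz reduct $\Pi_{\mathit{PBase}}^{M}$, verify that $M$ is a model of the reduct (using the plan conditions of Def.~\ref{plandef} to discharge the integrity constraints), and then establish minimality. The only noteworthy difference is in how minimality is argued: the paper proceeds by contradiction---assuming a strict subset $M'\subset M$ is also a model, picking an atom $s\in M\setminus M'$, and case-splitting on the predicate symbol of $s$ to show some applicable reduct rule is left unapplied---whereas you propose a level-mapping/stratification argument by induction on the state index $k$, showing each atom is forced by reduct rules whose bodies are already forced. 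Both are standard and sound; your inductive presentation is arguably cleaner because it makes the temporal stratification explicit and avoids the somewhat repetitive case analysis, while the paper's approach has the virtue of being entirely mechanical once the atom types are enumerated.
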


For the sake of making this proof self-contained, we first provide a formal definition of an answer set and explain the concept of reduct. As mentioned previously, a number of syntactic language representations for ASP exist. We use AnsProlog which is one of the most common classes of these languages. 
and it has the following elements \citep{Baral2003}:
\begin{compactitem}
  \item[Term:] A term is a constant or a variable or a n-ary function 
  $f(t_{1}, \cdots, t_{n})$, where $f$ is the function symbol and 
  $t_{1}, \cdots, t_{n}$ are terms. Constants start 
  with a lower-case letter, whereas variables start with an upper-case level. 
  A term is \emph{ground\/} if no variable occurs in it.
  \item[Atom:] Atoms are the basic components of the language that can be assigned a 
  truth value as true or false. An atom is a statement of form 
  $A(t_{1}, \cdots, t_{n})$, where $A$ is a predicate symbol and $t_{1}, \cdots, t_{n}$
  are terms.
  \item[Literal:] Literals are atoms or negated atoms.  Atoms are negated using 
\textit{negation as failure} ($not$). $not \; a$ is true if there is no evidence 
proving the truth of $a$. An atom preceded by $not$ is referred to as a naf-literal. 
\end{compactitem} 

The Herbrand universe of language $\mathcal L$ denoted as $HU_{\mathcal L}$ is the 
set of all ground terms which can be formed with the functions and constants in 
 $\mathcal L$. The set of all ground atoms which can be formed with the functions, 
 constants and predicates in $\mathcal L$ is called Herbrand base of language 
 $\mathcal L$ and is denoted using $HB_{\mathcal L}$. 

In Section \ref{(ASP)}, we explained that an AnsProlog program (e.g. $\Pi$) consists of a finite set of rules. 
In order to interpret a rule that contains variables, the rule has to be 
\emph{grounded}. 
The grounding of each rule $r$ in $\Pi$ is then the set of all 
rules obtained from substitutions of elements of $HU_{\Pi}$ for the 
variables in the rule $r$. By grounding all $r \in \Pi$, we obtain $ground(\Pi)$.


The semantics of AnsProlog is defined in terms of \emph{answer sets}.
The answer sets of program $\Pi$, are defined in terms of the answer sets of the
ground program $ground(\Pi)$. 
An AnsProlog program without any naf-literal is denoted as Ansprolog$^{-not}$.
An answer set of an AnsProlog$^{-not}$ program $\Pi$ is 
a minimal subset (with respect to subset ordering) $S$ of $HB$ that is closed 
under $ground(\Pi)$. 
The approach to define the answer sets of an AnsProlog program $\Pi$ is to take a 
candidate answer set $S$ of the program and transform $\Pi$  with respect to $S$
to obtain an  Ansprolog$^{-not}$ denoted by $\Pi^{S}$. $S$ is an answer set of 
$\Pi$ if $S$ is the answer set of AnsProlog$^{-not}$ program $\Pi^{S}$.
This transformation is referred to as Gelfond-Lifschitz \citep{Gelfond1988} 
transformation.

Given an AnsProlog program $\Pi$ and a set $S$ of atoms from $HB_{\Pi}$, 
the Gelfond-Lifschitz \citep{Gelfond1988} transformation $\Pi^{S}$ is obtained by 
deleting:
\begin{compactenum}
  \item each rule that has a $not \; L$ in its body with $L \in S$, and
  \item literals of form $not \; L$ in the bodies of the remaining rules.
\end{compactenum}
The transformation (reduct) of choice rules was not a part of original 
Gelfond-Lifschitz transformation and was introduced later in \citep{Lee2008}.
Recently a simplified  reduct for programs including choice rules is 
proposed by \citet{Law2015} as follows. 
Given an AnsProlog program $\Pi$ - with choice rules- and a set $S$ of atoms from 
$HB_{\Pi}$, the transformation $\Pi^{S}$ is constructed in the following 4 
steps:
\begin{compactenum}
  \item Delete each rule that has a $not \; L$ in its body with $L \in S$.
  \item Delete literals of form $not \; L$ in the bodies of the remaining rules.
  \item for any choice rule $r$, $l\{h_{0}, \cdots, h_{k} \}u :- \; body(r)$,
  such that $l \leq |S \cap \{h_{0}, \cdots, h_{k} \}| \leq u$, replace $r$ with the
  set of rules 
  $\{ h_{i} :- \; body^{+}(r) | h_{i} \in S \cap \{h_{0}, \cdots, h_{k} \} \}$. 
  \item for any remaining choice rules $r$, $l\{h_{0}, \cdots, h_{k} \}u :- 
  \; body(r)$, replace $r$ with the constraint $:- \; body^+(r)$.
\end{compactenum}
After these transformation, the AnsProlog program $\Pi$ is a program without any 
naf-literals and choice rules and it is therefore, an AnsProlog$^{-not}$, for 
which the answer sets are already defined.

\begin{proof}
Let the execution of sequence of actions in  $\pi=\langle (a_{0},0),\cdots,(a_{n},t_{a_{n}})\rangle$
bring about the sequence of states $\langle s_{0}, \cdots s_{q}\rangle$. Let $M_{t}$ be the set of following atoms (and nothing else): 

\begin{fleqn} 
\begin{align} \label{pr0}
\forall k, 0 \leq k \leq q \cdot M_{t} \models state(k)
\end{align}
\end{fleqn}

\begin{fleqn} 
\begin{align} \label{pr3}
\forall k, 0 \leq k \leq q \cdot x \in s_{k} \Rightarrow M_{t} \models holdsat(x,k)
\end{align}
\end{fleqn}

\begin{fleqn} 
\begin{align}\label{pr8}
\forall k, 0 \leq k < q \cdot x \in (s_{k} \setminus s_{k+1}) \Rightarrow M_{t} \models terminated(x,k)  
\end{align}
\end{fleqn}

\begin{fleqn} 
  \begin{align} \label{pr4}
\forall a \in A \cdot M_{t} \models action(a,d)  
 \end{align}
\end{fleqn}

\begin{fleqn} 
\begin{align}\label{pr5}
\forall k, 0 \leq k \leq q \cdot a \in A, pr(a)^{+} \subseteq s_{k}, pr(a)^{-} 
\not\in s_{k} \Rightarrow M_{t} \models pre(a,k)
\end{align}
\end{fleqn}

\begin{fleqn} 
\begin{align}\label{pr6}
\forall k, 0 \leq k < q \cdot (a,k) \;\widehat{\in}\; \pi \Rightarrow M_{t} \models executed(a,k)
\end{align}
\end{fleqn}

\begin{fleqn} 
\begin{align}\label{pr7}
\forall  a.M_{t} \models executed(a,k), \; \forall k, t_{a} \leq k < t_{a}+d(a) \cdot M_{t} \models inprog(a,k)  
\end{align}
\end{fleqn}

\begin{fleqn} 
\begin{align}\label{pr81}
\forall  a.M_{t} \models executed(a,k), \; \forall x \in  
ps(a)^+ \cdot M_{t} \models holdsat(x,k+d(a))
\end{align}
\end{fleqn}

\begin{fleqn} 
\begin{align}\label{pr82}
\forall  a.M_{t} \models executed(a,k), \; \forall x \in  
ps(a)^- \cdot M_{t} \models terminated(x,k+d(a)-1)
\end{align}
\end{fleqn}

\begin{fleqn} 
  \begin{align} \label{}
\forall g \in G \cdot M_{t} \models goal(g,v)  
 \end{align}
\end{fleqn}

\begin{fleqn}
\begin{align} \label{pr9}
\forall k, 0 \leq k \leq q \cdot g \in G, g^{+} \subseteq s_{k}, g^{-} 
\not\in s_{k} \Rightarrow M_{t} \models satisfied(g,k)
\end{align}
\end{fleqn}

\begin{fleqn} 
  \begin{align} \label{pr4}
\forall n \in N \cdot M_{t} \models norm(n,c) 
 \end{align}
\end{fleqn}

\begin{fleqn}
\begin{multline} \label{pr17}
\forall n=\langle o, a_{con}, a_{sub}, dl\rangle \in N. M_{t} \models 
executed(a_{con},t_{a_{con}}), \\ \forall k, t_{a_{con}}+d(a_{con}) \leq k \leq 
t_{a_{con}}+d(a_{con})+dl \cdot \\ M_{t} \models holdsat(o(n,t_{a_{con}},a_{sub},t_{a_{con}}+d(a_{con})+dl),k)
\end{multline}
\end{fleqn}

\begin{fleqn}
\begin{multline}  \label{pr12}
\exists k_{2}, 0 \leq k < q \cdot M_{t} \models holdsat(o(n,k_{1},a,dl'),k_{2}), M_{t} \models executed(a,k_{2}),\\
k_{2} != dl' \Rightarrow  M_{t} \models cmp(o(n,k_{1},a,dl'),k_{2})   
\end{multline}
\end{fleqn}

\begin{fleqn} 
\begin{multline}  \label{pr14}
\exists k_{2}, 0 \leq k < q, M_{t} \models cmp(o(n,k_{1},a,dl'),k_{2}) \Rightarrow \\
M_{t} \models terminated(o(n,k_{1},a,dl'),k_{2})
\end{multline}
\end{fleqn}

\begin{fleqn} 
\begin{align} \label{pr15}
\exists k_{2}, k_{2}=dl', M_{t} \models holdsat(o(n,k_{1},a,dl'),k_{2})
 \Rightarrow M_{t} \models vol(o(n,k_{1},a,dl'),k_{2})  
\end{align}
\end{fleqn}

\begin{fleqn} 
\begin{multline}  \label{pr16}
\exists k_{2}, 0 \leq k_{2} \leq q,   M_{t} \models  vol(o(n,k_{1},a,dl'),k_{2}) \Rightarrow \\
M_{t} \models terminated(o(n,k_{1},a,dl'),k_{2})
\end{multline}
\end{fleqn}

\begin{fleqn} 
\begin{multline} \label{pr11}
\forall n=\langle f, a_{con}, a_{sub}, dl'\rangle \in N. M_{t} \models 
executed(a_{con},t_{a_{con}}), \\ \forall k, t_{a_{con}}+d(a_{con}) \leq k \leq 
t_{a_{con}}+d(a_{con})+dl' \cdot \\ M_{t} \models holdsat(f(n,t_{a_{con}},a_{sub},t_{a_{con}}+d(a_{con})+dl'),k)
\end{multline}
\end{fleqn}

\begin{fleqn}
\begin{multline}\label{pr18}
\exists k_{2}, k_{2}=dl', M_{t} \models holdsat(f(n,k_{1},a,dl'),k_{2}) \Rightarrow \\
M_{t} \models cmp(f(n,k_{1},a,dl'),k_{2})   
\end{multline}
\end{fleqn}

\begin{fleqn}
\begin{multline} \label{pr20}
\exists k_{2}, 0 \leq k_{2} \leq q, M_{t} \models cmp(f(n,k_{1},a,dl'),k_{2}) \Rightarrow \\
M_{t} \models terminated(f(n,k_{1},a,dl'),k_{2})
\end{multline}
\end{fleqn}

\begin{fleqn}
\begin{multline} \label{pr21}
\forall k_{2}, 0 \leq k_{2} < q \cdot  M_{t} \models  holdsat(f(n,k_{1},a,dl'),k_{2}), M_{t} \models executed(a,k_{2}) \Rightarrow \\ M_{t} \models vol(f(n,k_{1},a,dl'),k_{2})  
\end{multline}
\end{fleqn}

\begin{fleqn}
\begin{multline} \label{pr22}
\exists k_{2}, 0 \leq k_{2} \leq q,   M_{t} \models  vol(f(n,k_{1},a,dl'),k_{2})   \Rightarrow \\
M_{t} \models terminated(f(n,k_{1},a,dl'),k_{2})
\end{multline}
\end{fleqn}

\begin{fleqn} 
\begin{align} \label{pr10}
\exists k, 0 \leq k \leq q,  M_{t} \models satisfied(g,k)  \Rightarrow M_{t} \models 
satisfied(g)
\end{align}
\end{fleqn}

We need to prove that $M_{t}$ is an answer set of $\Pi_{PBase}$. Therefore, 
we need to demonstrate that $M_{t}$ is a minimal model for $\Pi_{PBase}^{M_{t}}$.
Let $r \in \Pi_{PBase}^{M_{t}}$ 
be an applicable rule. In order for $M_{t}$ to be a model of $\Pi_{PBase}^{M_{t}}$, we need 
to show that $r$ is 
applied (i.e. $M_{t} \models Head(r)$). We will go through each rule in the same order 
in Lines \ref{time1}--\ref{ag72}. 
\begin{compactitem}
 \item r is of type rule in Line \ref{time1}: fact and automatically applied. 
 \item r is of type rule in Line \ref{initial}: fact and automatically applied. 
 \item r is of type rule in Lines \ref{ag101}--\ref{ag102}: because of  
 Gelfond-Lifschitz transformation, we know that $not \; terminated(x,s)$ is removed from this rule. Combination of \ref{pr3} and 
  \ref{pr8} for $x$ at $k$ gives $M_{t} \models holdsat(x,k+1)$.  
 \item r is of type rule in Line \ref{ag1}:  fact and automatically applied. 
  \item r is of type rule in Line \ref{ag2}: after Gelfond-Lifschitz 
  transformation, from the body and description of this rule we have 
  $a \in A$ and $ pr(a)^{+} \in s_{k}$ and  $pr(a)^{-} \not\in s_{k}$, which 
  with \ref{pr5} implies that $M_{t} \models pre(a,k)$.
 \item r is of type rule in Line \ref{ag3}: any action $a \in A$ can be executed 
 in a state. After the transformation for choice rules, we obtain 
 $\forall (a,k) \;\widehat{\in}\; \pi$ we have $executed(a,k):- \; action(a,d),$ $state(k).$
 and $\forall (a,k) \mbox{ s.t. } (a,k) \;\widehat{\not\in}\; \pi $ we have $:- \; action(a,d), state(k).$
 With \ref{pr6}, we know that $M_{t} \models executed(a,k)$.
  \item r is of type rule in Lines \ref{ag41}--\ref{ag42}: $inprog$ atoms originate from 
  execution of actions. From \ref{pr6} we know that 
  $\forall (a,k) \;\widehat{\in}\;  \pi, M_{t} \models executed(a,k)$. Since $a$ is executed 
  with \ref{pr7} we have 
  $\forall k, t_{a} \leq k < t_{a}+ d(a), M_{t} \models inprog(a,k)$.
  \item r is of type rule in Line \ref{ag4}: the head of this rule  is empty. 
  Since $\pi$ is a plan and we have assumed the preconditions of actions in a 
  plan are hold while the actions are in progress, this rule is applied.
  \item r is of type rule in Lines \ref{ag51}--\ref{ag52}: the head of this rule is empty. 
  Because $\pi$ is a plan and we have assumed that two actions in a plan cannot have 
  exactly the same start state, this rule is applied.
  \item r is of type rule in Lines \ref{ag81}--\ref{ag82}: the body of this rule implies that the 
  add postconditions of an executed action $a$ hold in the state in which 
  the action ends. Since $\forall (a,k) \;\widehat{\in}\; \pi, M_{t} 
  \models executed(a,k)$, with \ref{pr81} we have $\forall x \in  ps(a)^+ \cdot M_{t} 
  \models holdsat(x,k+d(a))$.
  \item r is of type rule in Lines \ref{ag91}--\ref{ag92}:  the body of this rule implies that the 
 delete postconditions of an executed action $a$ are terminated in the state 
 before the end state of the action. Since $\forall (a,k) \;\widehat{\in}\; \pi, M_{t} 
  \models executed(a,k)$, with \ref{pr82} we have $\forall x \in  ps(a)^- \cdot M_{t} 
  \models terminated(x,k+d(a)-1)$.
  \item $r$ is of type rule in Line \ref{goal0}: fact and automatically applied.
  \item r is of type rule in Line \ref{goal1}: after Gelfond-Lifschitz 
  transformation, from the body and description of this rule we have 
  $ g^{+} \in s_{k}$ and $g^{-} \not\in s_{k}$, which with \ref{pr9}
  implies that $M_{t} \models satisfied(g,k)$.
  \item $r$ is of type rule in Line \ref{norm1}: fact and automatically applied.
  \item r is of type rule in Lines \ref{norm21}--\ref{norm22}: the body of the rule implies 
  that normative fluents for obligations hold over the compliance period if 
  the action that is the condition of the norm is executed. If $a_{con}$ for an 
  obligation norm belongs to $\pi$, then based on \ref{pr6} we know that $M_{t} \models 
  executed(a_{con},t_{a_{con}})$. From \ref{pr6} and \ref{pr17} we know that 
  $M_{t} \models holdsat(o(n,k_{1},a_{sub},t_{a_{con}}+d(a_{con})+dl'),k_{2})$ over the period $t_{a_{con}}+d(a_{con}) \leq k_{2} \leq t_{a_{con}}+d(a_{con})+dl'$.  
  \item r is of type rule in Lines \ref{norm41}--\ref{norm42}: this rule states that if the 
  obliged action is executed while the normative fluent holds, the norm is 
  complied with. If $a_{sub}$ is executed in $\pi$ 
  ($M_{t} \models executed(a_{con},t_{a_{con}})$) while the normative fluent 
  in \ref{pr17} holds, with \ref{pr12} we know that $M_{t}$ models the compliance 
  atom.
 \item r is of type rule in Lines \ref{norm51}--\ref{norm52}: complied obligations are  terminated in the 
  compliance state. With \ref{pr12} we know that $M_{t}$ models compliance atoms, 
  and \ref{pr14} implies that they are terminated in the same state.
  \item r is of type rule in Lines \ref{norm61}--\ref{norm62}: if the obligation fluent still holds when the 
  deadline occurs, the obligation is violated. \ref{pr15} implies this is modelled by
  $M_{t}$.
  \item r is of type rule in Line \ref{norm71}--\ref{norm72}: violated obligations are terminated in the 
  violation state. With \ref{pr15} we know that $M_{t}$ models violation atoms, 
  and \ref{pr16} implies that they are terminated.
\item  r is of type rule in Lines \ref{norm81}--\ref{norm82}: 
the body of the rule implies 
  that normative fluents for prohibition norms are hold over the compliance period if 
  the action that is the condition of the norm is executed. If $a_{con}$ for a
  prohibition belongs to $\pi$, then based on \ref{pr6} we know that $M_{t} \models 
  executed(a_{con},t_{a_{con}})$. 
  From \ref{pr6} and \ref{pr11} we know that 
  $M_{t} \models holdsat(f(n,k_{1},a_{sub},t_{a_{con}}+d(a_{con})+dl'),k_{2})$ over the period
   $t_{a_{con}}+d(a_{con}) \leq k_{2} \leq t_{a_{con}}+d(a_{con})+dl'$.  
\item r is of type rule in Lines \ref{norm101}--\ref{norm102}: this rule states that if the 
normative fluent still holds at the end of compliance period, the prohibition 
  is complied with. \ref{pr18} implies that $M_{t}$ models the head of this 
  rule.
\item r is of type rule in Line \ref{norm110}--\ref{norm111}: complied prohibitions are  terminated in the 
  compliance state. With \ref{pr18} we know that $M_{t}$ models compliance atoms, 
  and \ref{pr20} implies that they are terminated and this rule is applied.
\item r is of type rule in Lines \ref{norm121}--\ref{norm122}: this rule states that if the 
  prohibited action is executed while the normative fluent holds, the norm is 
  violated. If $a_{sub}$ is executed in $\pi$ 
  ($M_{t} \models executed(a_{sub},t_{a_{sub}})$) while the normative fluent 
  in \ref{pr11} holds, with \ref{pr21} we know that $M_{t}$ models the violation 
  atom.
\item r is of type rule in Line \ref{norm131}--\ref{norm132}: violated prohibitions are terminated in 
violation state. With \ref{pr21} we know that $M_{t}$ models violation atoms, 
  and \ref{pr22} implies that they are terminated.
\item r is of type rule in Line \ref{goal2}: this rule is applicable whenever a goal is 
satisfied in a state. With \ref{pr9} and \ref{pr10} we can obtain this ($M_{t} \models 
satisfied(g)$).
 \item r is of type rule in Line \ref{p1}: the head of this rule is empty. 
  Because  $\pi$ is a plan it has to satisfy at least one goal, so this rule is applied.
 
\item r is of type rule in Line \ref{ag71}--\ref{ag72}: the head of this rule is empty. 
  Because  $\pi$ is a plan and a plan cannot contain concurrent execution 
  of actions, this rule is applied.
\end{compactitem}
By showing that all the rules, apart from those not applicable, are applied, we have shown that $M_{t}$ is a model for $\Pi_{PBase}^{M_{t}}$.


Now, we need to show that $M_{t}$ is minimal, which means that there exists 
no other model of $\Pi_{PBase}^{M_{t}}$ that is a subset of $M_{t}$.

Let $M \subset M_{t}$ be a model for $\Pi_{PBase}^{M_{t}}$, then there must exist an atom 
$s \in (M_{t} \setminus M)$. If $s$ is an atom that is generated because it is a 
fact, it must belong to $M$ too and if that is not the case, then $M$ cannot be 
a model. We now proceed with the rest of atoms that do not appear as facts:

\begin{compactitem}
   \item $s = executed(a,k)$: $M_{t} \models s$ implies that 
   $r: executed(a,k) :- \; action(a,$ $d),state(k).$, $r \in \Pi_{PBase}^{M_{t}}$.
   If $M \not\models  executed(a,k)$, then $r$ was applicable but not applied, 
   therefore, $M$ is not a model.
   
  \item $s = holdsat(x,k)$: $M_{t} \models s$ implies that one of the following 
  four condition must have occurred:
  \begin{compactitem}
  
  \item \ref{pr3}: if this is the case then  $s$ was true in $s_{k-1}$ (not 
  terminated is removed from the body of this rule because of the Gelfond-Lifschitz). 
  In this case the 
  construction of  $M_{t}$ guarantees $M_{t} \models holdsat(x,k-1)$.
  If $M \not\models holdsat(a,k)$ then rule in Lines \ref{ag101}--\ref{ag102}  
  are applicable but not applied, so $M$ cannot be a model.
  
  \item \ref{pr81}: if this is the case then we know $M_{t} \models executed(a,k)$.
  Earlier we showed that if $M_{t} \models executed(a,k)$, then 
  $M \models executed(a,k)$ too. Thus, if $M \not\models holdsat(x,k+d(a))$ for 
  all $x \in ps(a)^{+}$, then rule in Lines \ref{ag81}--\ref{ag82}
  is applicable but not applied, so $M$ cannot be a model.

\item \ref{pr17}: if this is the case then  $x = o(n,k,a_{sub},
t_{a_{con}}+d(a_{con})+dl)$. Because 
$M_{t}$ is a model then $M_{t} \models executed(a_{con},t_{a_{con}})$. 
So $M \models executed(a_{con},t_{a_{con}})$ too. Therefore, rule in 
Lines \ref{norm21}--\ref{norm22} is applicable and if $M \not\models s$ then 
this rule is not applied and $M$ cannot be a model.

\item \ref{pr11}: if this is the case then $x = f(n,k,a_{sub},t_{a_{con}}+d(a_{con})+dl)$. 
Similar to reasoning above but instead of rule in 
Lines \ref{norm21}--\ref{norm22}, rule in Lines \ref{norm81}--\ref{norm82} is 
applicable. 
\end{compactitem}
 
  \item $s = pre(a,k)$: $M_{t} \models s$ implies that $\forall x \in pr(a)^{+}, 
  x \in s_{k}$ and $\forall x \in pr(a)^{-}, x \not\in s_{k}$. If $M$ is a model 
  then according to the transformed version of rule \ref{ag2}, $M \models pre(a,k)$. 
  If that is not the case, then $M$ is not a model.
  
  \item $s = inprog(a,k)$: $M_{t} \models s$ implies that $M_{t} \models executed(a,t_{a})$. 
  Therefore, $M \models executed(a,t_{a})$. That means the rule in 
  Lines \ref{ag41}--\ref{ag42} is applicable and if 
  $\forall t_{a} \leq k < t_{a}+d(a), M \not\models inprog(a,k)$ then this 
  applicable rule is not applied. Therefore $M$ cannot be a model.
  
  \item $s = cmp(o(n,k_{1},a,dl'),k_{2})$: $M_{t} \models s$ implies that
  $M_{t} \models holdsat(o(n,k_{1},$ $a,dl'),k_{2})$ and also $M_{t} \models executed(a,k_{2})$.
  Consequently, $M \models holdsat$ $(o(n,k_{1},a,dl'),k_{2})$ and  $M \models executed(a,k_{2})$. 
  As a result, rule in Lines \ref{norm41}--\ref{norm42} is applicable and if 
  $M \not\models s$, the rule is not applied and $M$ is not a model. 
  
  \item $s = vol(o(n,k_{1},a,dl'),k_{2})$: $M_{t} \models s$ implies that
  $M_{t} \models holdsat(o(n,k_{1},$ $a,dl'),k_{2})$ and also $k_{2}=dl'$.
  Because $M$ is a model we know that $M \models holdsat(o(n,k_{1},a,dl'),k_{2})$.
 As a result, rule in Line \ref{norm62} is applicable and if 
  $M \not\models  vol(o(n,k_{1},a,dl'),k_{2})$, the rule is not applied and $M$ is not a model.
  
    \item $s = cmp(f(n,k_{1},a,$ $dl'),k_{2})$: $M_{t} \models s$ implies that
  $M_{t} \models holdsat(f(n,k_{1},$ $a,dl'),k_{2})$ and also $k_{2}=dl'$.
  Because $M$ is a model we know that $M \models holdsat(f(n,k_{1},a,dl'),k_{2})$ too. 
  As a result rule in 
  Lines \ref{norm101}--\ref{norm102} is applicable and if 
  $M \not\models  cmp(f(n,k_{1},a,dl'),k_{2})$, the rule is not applied and $M$ is not a model. 
    \item $s = vol(f(n,k_{1},a,dl'),k_{2})$: $M_{t} \models s$ implies that
  $M_{t} \models holdsat(f(n,k_{1},$ $a,dl'),k_{2})$ and also $M_{t} \models executed(a,k_{2})$.
  Consequently, we know that $M \models holdsat(f(n,k_{1},a,dl'),k_{2})$
  and  $M \models executed(a,k_{2})$. As a result rule in 
  Lines \ref{norm121}--\ref{norm122} is applicable and if 
  $M \not\models  vol(f(n,k_{1},a,dl'),k_{2})$, the rule is not applied and $M$ is not a model. 
  
  \item $s = terminated(x,k)$: $M_{t} \models s$ implies that 
  one of the following five situations hold:
  \begin{compactitem}
    \item \ref{pr82}: if this is the case then $M_{t} \models 
    executed(a,k-d(a)+1)$. If $M$ is a 
    model $M \models executed(a,k-d(a)+1)$. Thus, the rule in 
    Lines \ref{ag91}--\ref{ag92} is applicable and if $M \not\models terminated(s,k)$
    then the rule is applicable but not applied, which means that $M$ is not a  
    model. 
    
    \item \ref{pr14} and \ref{pr16}: if this is the case $x= o(n,k_{2},a,dl') \in s_{k}$, 
    and $M_{t} \models cmp(o(n,k_{1},a,dl'),k)$ or $M_{t} \models vol(o(n,k_{1},a,dl'),k)$. 
    Since $M$ is a model, then  if 
    $M_{t} \models cmp(o(n,k_{1},a,dl'),k)$ the same applies to $M$ and if 
    $M_{t} \models vol(o(n,k_{1},a,dl'),k)$ again the same applies to $M$.
    In either case, according to rule in Lines \ref{norm51}--\ref{norm52} and rule in Lines \ref{norm71}--\ref{norm72},
    $M \models terminated(o(n,k_{1},a,dl'),k)$ or $M$ is not a  model.  
    
    \item \ref{pr20} and \ref{pr22}: if this is the case $x= f(n,k_{1},a,dl')\in s_{k}$, 
    and  $M_{t} \models cmp(f(n,k_{1},a,dl'),k)$ or $M_{t} \models vol(f(n,k_{1},a,dl'),k)$. 
    Since $M$ is a model, if 
    $M_{t} \models cmp(f(n,k_{1},a,dl'),k)$ the same applies to $M$ and if 
    $M_{t} \models vol(f(n,k_{1},a,dl'),k)$ again the same applies to $M$.
    In either case, according to rule in Lines \ref{norm110}--\ref{norm111} and rule in Lines \ref{norm131}--\ref{norm132},
    $M \models terminated(f(n,k_{1},a,dl'),k)$ or $M$ is not a  model.  
  \end{compactitem} 
  
  \item $s=satisfied(g,k)$: $M_{t} \models s$ implies that $\forall x \in g^{+}, 
  x \in s_{k}$ and $\forall x \in g^{-}, x \not\in s_{k}$. If $M$ is a model 
  then according to the transformed version of rule \ref{goal1} 
  $M \models satisfied(g,k)$ and if that is not the case then $M$ is not a model.

\item $s=satisfied(g)$: $M_{t} \models s$ because $M_{t} \models 
satisfied(g,k)$. Since $M_{t} \models satisfied(g,k)$, according to previous 
item $M \models satisfied(g,k)$ too, therefore rule \ref{goal2} is applicable and 
$M \models satisfied(g)$ or it is not a model.
\end{compactitem}
The combination of these items demonstrates that $M$ cannot be a model for 
$\Pi_{PBase}^{M_{t}}$ if it differs from $M_{t}$. $M_{t}$ is therefore a minimal 
model for $\Pi_{PBase}^{M_{t}}$ and an answer set for $\Pi_{PBase}$.
\end{proof}

\bibliographystyle{elsarticle-harv} 

\begin{thebibliography}{67}
\expandafter\ifx\csname natexlab\endcsname\relax\def\natexlab#1{#1}\fi
\expandafter\ifx\csname url\endcsname\relax
  \def\url#1{\texttt{#1}}\fi
\expandafter\ifx\csname urlprefix\endcsname\relax\def\urlprefix{URL }\fi

\bibitem[{{\AA}gotnes et~al.(2007){\AA}gotnes, van~der Hoek, and
  Wooldridge}]{Agotnes2007}
{\AA}gotnes, T., van~der Hoek, W., Wooldridge, M., 2007. Normative system
  games. In: Durfee, E.~H., Yokoo, M., Huhns, M.~N., Shehory, O. (Eds.), 
  International Conference on Autonomous Agents and Multiagent Systems. {IFAAMAS}, p. 129.


\bibitem[{Aldewereld et~al.(2006)Aldewereld, Dignum, Garc{\'{\i}}a{-}Camino,
  Noriega, Rodr{\'{\i}}guez{-}Aguilar, and Sierra}]{Aldewereld2006}
Aldewereld, H., Dignum, F., Garc{\'{\i}}a{-}Camino, A., Noriega, P.,
  Rodr{\'{\i}}guez{-}Aguilar, J.~A., Sierra, C., 2006. Operationalisation of
  norms for usage in electronic institutions. In: Nakashima, H., Wellman,
  M.~P., Weiss, G., Stone, P. (Eds.), International Conference on
  Autonomous Agents and Multiagent Systems. {ACM}, pp. 223--225.

\bibitem[{Alechina et~al.(2012)Alechina, Dastani, and Logan}]{Alechina2012}
Alechina, N., Dastani, M., Logan, B., 2012. Programming norm-aware agents. In:
  van~der Hoek, W., Padgham, L., Conitzer, V., Winikoff, M. (Eds.),
  International Conference on Autonomous Agents and Multiagent Systems. {IFAAMAS}, pp. 1057--1064.

\bibitem[{Alrawagfeh and Meneguzzi(2014)}]{Alrawagfeh2014}
Alrawagfeh, W., Meneguzzi, F., 2014. Utilizing permission norms in BDI
  practical normative reasoning. In: International Workshop on
  Coordination, Organizations, Institutions, and Norms. Springer International Publishing, pp. 1--18.

\bibitem[{Artikis et~al.(2009)Artikis, Sergot, and Pitt}]{Artikis2009}
Artikis, A., Sergot, M.~J., Pitt, J.~V., 2009. Specifying norm-governed
  computational societies. ACM Transactions on
Computational Logic 10~(1), 1--42.
  
  \bibitem[{Baral(2003)}]{Baral2003}
Baral, C., 2003. Knowledge Representation, Reasoning, and Declarative Problem
  Solving. Cambridge University Press, New York, NY, USA.


\bibitem[{Blum and Furst(1997)}]{Blum1997}
Blum, A.~L., Furst, M.~L., 1997. Fast planning through planning graph analysis.
  Artificial Intelligence 90~(1), 281 -- 300.

\bibitem[{Bordini et~al.(2007)Bordini, Wooldridge, and
  H\"{u}bner}]{Bordini2007}
Bordini, R.~H., Wooldridge, M., H\"{u}bner, J.~F., 2007. Programming
  Multi-Agent Systems in {AgentSpeak} using {Jason}. Wiley Series in Agent
  Technology. John Wiley \& Sons.

\bibitem[{B\"{o}rger and St\"{a}rk(2003)}]{Borger2003}
B\"{o}rger, E., St\"{a}rk, R., 2003. Asynchronous multi-agent asms. In:
  Abstract State Machines. Springer Berlin Heidelberg, pp. 207--282.

\bibitem[{Broersen et~al.(2001)Broersen, Dastani, Hulstijn, Huang, and {van der
  Torre}}]{Dastani2001}
Broersen, J., Dastani, M., Hulstijn, J., Huang, Z., {van der Torre}, L., 2001.
  The BOID architecture: Conflicts between beliefs, obligations, intentions and
  desires. International Conference on Autonomous
  Agents. {ACM}, pp. 9--16.

\bibitem[{Broersen et~al.(2002)Broersen, Dastani, Hulstijn, and {van der
  Torre}}]{Broersen2002}
Broersen, J., Dastani, M., Hulstijn, J., {van der Torre}, L., 2002. Goal
  generation in the {BOID} architecture. Cognitive Science Quarterly 2~(3-4),
  428--447.

\bibitem[{Chesani et~al.(2013)Chesani, Mello, Montali, and
  Torroni}]{Chesani2013}
Chesani, F., Mello, P., Montali, M., Torroni, P., 2013. Representing and
  monitoring social commitments using the event calculus. Autonomous Agents and
  Multi-Agent Systems 27~(1), 85--130.

\bibitem[{Cliffe et~al.(2006)Cliffe, {De Vos}, and Padget}]{padget2006}
Cliffe, O., {De Vos}, M., Padget, J.~A., 2006. Answer set programming for
  representing and reasoning about virtual institutions. In: Inoue, K., Satoh,
  K., Toni, F. (Eds.), International Workshop on Computational Logic in
  Multi-Agent Systems. Springer, pp. 60--79.

\bibitem[{Criado(2012)}]{Criado2012}
Criado, N., 2012. Using norms to control open multi-agent systems. Ph.D.
  thesis, Universidad Politecnica de Valencia.
  
\bibitem[{Criado et~al.(2010)Criado, Argente, and Botti}]{Criado2010}
Criado, N., Argente, E., Botti, V.~J., 2010. A {BDI} architecture for normative
  decision making. In: van~der Hoek, W., Kaminka, G.~A., Lesp{\'{e}}rance, Y.,
  Luck, M., Sen, S. (Eds.), International Conference on Autonomous Agents
  and Multiagent Systems. {IFAAMAS}, pp. 1383--1384.

\bibitem[{Dastani(2008)}]{Dastani2008}
Dastani, M., 2008. 2APL: a practical agent programming language. Autonomous
  Agents and Multi-Agent Systems 16(3), 214--248.

\bibitem[{de~Boer et~al.(2002)de~Boer, Hindriks, van~der Hoek, and
  Meyer}]{Boer2002}
de~Boer, F.~S., Hindriks, K.~V., van~der Hoek, W., Meyer, J.~C., 2002. Agent
  programming with declarative goals. CoRR cs.AI/0207008.

\bibitem[{{De Vos} et~al.(2013){De Vos}, Balke, and Satoh}]{DeVos2013}
{De Vos}, M., Balke, T., Satoh, K., 2013. Combining event-and state-based
  norms. In: International conference on Autonomous Agents and Multi-Agent
  Systems. pp. 1157--1158.
  
\bibitem[{Doherty et~al.(1998)Doherty, Gustafsson, Karlsson, and
  Kvarnstr{\"{o}}m}]{Doherty1998}
Doherty, P., Gustafsson, J., Karlsson, L., Kvarnstr{\"{o}}m, J., 1998. {TAL:}
  temporal action logics language specification and tutorial. Electronic Transactions on Artificial Intelligence (2), 273--306.

\bibitem[{Eiter et~al.(1999)Eiter, Faber, Leone, and
  Pfeifer}]{Eiter99thediagnosis}
Eiter, T., Faber, W., Leone, N., Pfeifer, G., 1999. The diagnosis frontend of
  the {DLV} system. AI Communications 12, 12--1.

\bibitem[{Esteva et~al.(2001)Esteva, Rodr{\'{\i}}guez{-}Aguilar, Sierra,
  Garcia, and Arcos}]{Esteva2001}
Esteva, M., Rodr{\'{\i}}guez{-}Aguilar, J.~A., Sierra, C., Garcia, P., Arcos,
  J.~L., 2001. On the formal specifications of electronic institutions. In:
  Dignum, F., Sierra, C. (Eds.), Agent Mediated Electronic Commerce, The
  European AgentLink Perspective. Vol. 1991 of Lecture Notes in Computer
  Science. Springer, pp. 126--147.

\bibitem[{Fikes and Nilsson(1971)}]{Nilsson1971}
Fikes, R.~E., Nilsson, N.~J., 1971. Strips: A new approach to the application
  of theorem proving to problem solving. 
  International Joint Conference on Artificial Intelligence. Morgan
  Kaufmann Publishers Inc., pp. 608--620.


\bibitem[{Fox and Long(2003)}]{Fox2003}
Fox, M., Long, D., 2003. PDDL2.1: An extension to pddl for expressing temporal
  planning domains. Artificial Intelligence Research 20,
  61--124.

\bibitem[{Garrido et~al.(2002)Garrido, Fox, and Long}]{Garrido2002}
Garrido, A., Fox, M., Long, D., 2002. A temporal planning system for durative
  actions of {PDDL2.1}. In:  \cite{DBLP:conf/ecai/2002}, pp. 586--590.

\bibitem[{Gasparini et~al.(2015)Gasparini, Norman, Kollingbaum, Chen, and
  Meyer}]{Gasparini2015}
Gasparini, L., Norman, T.~J., Kollingbaum, M.~J., Chen, L., Meyer, J.~C., 2015.
  Verifying normative system specification containing collective imperatives
  and deadlines. In: Weiss, G., Yolum, P., Bordini, R.~H., Elkind, E. (Eds.),
 International Conference on Autonomous Agents and Multiagent Systems. {ACM}, pp.
  1821--1822.

\bibitem[{Gebser et~al.(2011)Gebser, Kaminski, Kaufmann, Ostrowski, Schaub, and
  Schneider}]{gekakaosscsc11a}
Gebser, M., Kaminski, R., Kaufmann, B., Ostrowski, M., Schaub, T., Schneider,
  M., 2011. Potassco: The {P}otsdam answer set solving collection. AI
  Communications 24~(2), 107--124.

\bibitem[{Gelfond and Lifschitz(1988)}]{Gelfond1988}
Gelfond, M., Lifschitz, V., 1988. The stable model semantics for logic
  programming. In: Kowalski, R.~A., Bowen, K.~A. (Eds.), International Conference
 and Symposium on Logic Programming. MIT Press, pp. 1070--1080.

\bibitem[{Gelfond and Lifschitz(1998)}]{Gelfond1998}
Gelfond, M., Lifschitz, V., 1998. Action languages. Electronic Transactions on
  Artificial Intelligence 2, 93--210.

\bibitem[{van Harmelen(2002)}]{DBLP:conf/ecai/2002}
van Harmelen, F. (Ed.), 2002. Proceedings of the 15th Eureopean Conference on
  Artificial Intelligence, ECAI'2002, Lyon, France, July 2002. {IOS} Press.
  
\bibitem[{Hoffmann and Nebel(2001)}]{Hoffmann2001}
Hoffmann, J., Nebel, B., 2001. The FF planning system: Fast plan generation
  through heuristic search. Artificial Intelligence Research,
 14, 253--302.

\bibitem[{Hulstijn and {van der Torre}(2004)}]{leendert2004}
Hulstijn, J., {van der Torre}, L. W.~N., 2004. Combining goal generation and
  planning in an argumentation framework. In: Delgrande, J.~P., Schaub, T.
  (Eds.), Non Monotonic Reasoning. pp. 212--218.

\bibitem[{Kafali et~al.(2014)Kafali, G{\"{u}}nay, and Yolum}]{Kafali2014}
Kafali, {\"{O}}., G{\"{u}}nay, A., Yolum, P., 2014. {GOSU:} computing goal
  support with commitments in multiagent systems. In: Schaub, T., Friedrich,
  G., O'Sullivan, B. (Eds.),  European Conference on
  Artificial Intelligence. Vol.
  263 of Frontiers in Artificial Intelligence and Applications. {IOS} Press,
  pp. 477--482.

\bibitem[{Knoblock(1994)}]{Knoblock1994}
Knoblock, C.~A., 1994. Generating parallel execution plans with a partial-order
  planner. In: Hammond, K.~J. (Ed.), International
  Conference on Artificial Intelligence Planning Systems. {AAAI}, pp. 98--103.

\bibitem[{Kollingbaum(2005)}]{Kollingbaum2005}
Kollingbaum, M., 2005. Norm-governed practical reasonig agents. Ph.D. thesis,
  University of Aberdeen.

\bibitem[{Kowalski and Sergot(1986)}]{Kowalski1986}
Kowalski, R., Sergot, M., Jan. 1986. A logic-based calculus of events. New Generation Computing. 4(1), 67--95.

\bibitem[{Lee et~al.(2008)Lee, Lifschitz, and Palla}]{Lee2008}
Lee, J., Lifschitz, V., Palla, R., 2008. A reductive semantics for counting and
  choice in answer set programming. In: Fox, D., Gomes, C.~P. (Eds.), International
  Conference on Artificial Intelligence. {AAAI} Press, pp.472--479.
  

\bibitem[{Lee et~al.(2014)Lee, Padget, Logan, Dybalova, and
  Alechina}]{DBLP:conf/dalt/LeePLDA14}
Lee, J., Padget, J., Logan, B., Dybalova, D., Alechina, N., 2014. {N-Jason}:
  Run-time norm compliance in {AgentSpeak(L)}. In: Dalpiaz, F., Dix, J., van
  Riemsdijk, M.~B. (Eds.), International Workshop in Engineering Multi-Agent Systems. Vol. 8758 of Lecture Notes in Computer Science. Springer, pp. 367--387.

\bibitem[{Lee and Palla(2012)}]{Lee2012}
Lee, J., Palla, R., 2012. Reformulating temporal action logics in answer set
  programming. In: Hoffmann, J., Selman, B. (Eds.), Conference on Artificial Intelligence. {AAAI} Press, 786--792.

\bibitem[{Lee and Palla(2014)}]{Lee2014}
Lee, J., Palla, R., 2014. Reformulating the situation calculus and the event
  calculus in the general theory of stable models and in answer set
  programming. CoRR abs/1401.4607.

\bibitem[{Lifschitz(2002)}]{Lifschitz2002}
Lifschitz, V., 2002. Answer set programming and plan generation. Artificial Intelligence 138(1-2), 39--54.

\bibitem[{Lifschitz(2008)}]{Lifschitz2008}
Lifschitz, V., 2008. What is answer set programming? In: Fox, D., Gomes, C.~P. (Eds.),
International Conference on Artificial Intelligence. {AAAI}, pp. 1594--1597.


\bibitem[{{L{\'o}pez~y~L{\'o}pez} et~al.(2005){L{\'o}pez~y~L{\'o}pez}, Luck,
  and d'Inverno}]{Lopez2005}
{L{\'o}pez~y~L{\'o}pez}, F., Luck, M., d'Inverno, M., 2005. A normative
  framework for agent-based systems. In: Normative Multi-Agent Systems
  (NORMAS). pp. 24--35.

\bibitem[{Mark~Law and Broda(2015)}]{Law2015}
Law, M., Russo, A., Broda, k., 2015. Simplified reduct for choice rules in ASP .
  Technical Report., Imperial College.
  

\bibitem[{Meneguzzi et~al.(2015)Meneguzzi, Rodrigues, Oren, Vasconcelos, and
  Luck}]{Meneguzzi2015}
Meneguzzi, F., Rodrigues, O., Oren, N., Vasconcelos, W.~W., Luck, M., 2015.
  {BDI} reasoning with normative considerations. Engineering Application of Artificial Intelligence 43, 127--146.

\bibitem[{Modgil and Luck(2008)}]{Modgil2008}
Modgil, S., Luck, M., 2008. Argumentation based resolution of conflicts between
  desires and normative goals. In: Rahwan, I., Moraitis, P. (Eds.),
  Argumentation in Multi-Agent Systems. Vol. 5384 of Lecture Notes in
  Computer Science. Springer, pp. 19--36.

\bibitem[{Nigam and Leite(2006)}]{Nigam2006}
Nigam, V., Leite, J., 2006. A dynamic logic programming based system for agents
  with declarative goals. In: Baldoni, M., Endriss, U. (Eds.), International Workshop on Declarative Agent Languages and Technologies. Vol. 4327
  of Lecture Notes in Computer Science. Springer, pp. 174--190.

\bibitem[{Nunes et~al.(1997)Nunes, Fiadeiro, and Turski}]{Nunes1997}
Nunes, I., Fiadeiro, J.~L., Turski, W.~M., 1997. Coordination durative actions.
  In: Garlan, D., M{\'{e}}tayer, D.~L. (Eds.), International Conference on Coordination Languages and Models. Vol. 1282 of Lecture Notes in Computer
  Science. Springer, pp. 115--130.

\bibitem[{Oren et~al.(2010)Oren, Croitoru, Miles, and Luck}]{2Luck2010}
Oren, N., Croitoru, M., Miles, S., Luck, M., 2010. Understanding permissions
  through graphical norms. In: Omicini, A., Sardi{\~{n}}a, S., Vasconcelos,
  W.~W. (Eds.), Declarative Agent Languages and Technologies. Vol.
  6619 of Lecture Notes in Computer Science. Springer, pp. 167--184.

\bibitem[{Oren et~al.(2011)Oren, Vasconcelos, Meneguzzi, and
  Luck}]{Wamberto2011}
Oren, N., Vasconcelos, W., Meneguzzi, F., Luck, M., 2011. Acting on norm
  constrained plans. In: Leite, J., Torroni, P., {\AA}gotnes, T., Boella, G.,
  van~der Torre, L. (Eds.), Computational Logic in Multi-Agent Systems. Vol. 6814 of Lecture Notes in Computer
  Science. Springer, pp. 347--363.

\bibitem[{Panagiotidi et~al.(2012{\natexlab{a}})Panagiotidi,
  V{\'{a}}zquez{-}Salceda, and Dignum}]{Sofia2012}
Panagiotidi, S., V{\'{a}}zquez{-}Salceda, J., Dignum, F., 2012{\natexlab{a}}.
  Reasoning over norm compliance via planning. In: Aldewereld, H., Sichman,
  J.~S. (Eds.), International Workshop on Coordination, Organizations, Institutions, and Norms in Agent Systems.
  Vol. 7756 of Lecture Notes in Computer Science. Springer, pp. 35--52.

\bibitem[{Panagiotidi et~al.(2012{\natexlab{b}})Panagiotidi,
  V{\'{a}}zquez{-}Salceda, and Vasconcelos}]{Panagiotidi2012}
Panagiotidi, S., V{\'{a}}zquez{-}Salceda, J., Vasconcelos, W.,
  2012{\natexlab{b}}. Contextual norm-based plan evaluation via answer set
  programming. In: P{\'{e}}rez, J.~B., S{\'{a}}nchez, M.~A., Mathieu, P.,
  Rodr{\'{\i}}guez, J. M.~C., Adam, E., Ortega, A., Garc{\'{\i}}a, M. N.~M.,
  Navarro, E., Hirsch, B., Cardoso, H.~L., Juli{\'{a}}n, V. (Eds.),
  International Conference on Practical Applications of Agents and Multi-Agent
  Systems. Vol. 156 of Advances in Soft Computing. Springer, pp. 197--206.

\bibitem[{Pitt et~al.(2013)Pitt, Busquets, and Riveret}]{Pitt2013}
Pitt, J., Busquets, D., Riveret, R., Sept 2013. Formal models of social
  processes: The pursuit of computational justice in self-organising
  multi-agent systems. In: International Conference on Self-Adaptive and Self-Organizing Systems. pp. 269--270.

\bibitem[{Pokahr et~al.(2005)Pokahr, Braubach, and Lamersdorf}]{Pokahr2005}
Pokahr, A., Braubach, L., Lamersdorf, W., 2005. A goal deliberation strategy
  for {BDI} agent systems. In: Eymann, T., Kl{\"{u}}gl, F., Lamersdorf, W.,
  Klusch, M., Huhns, M.~N. (Eds.), German Conference on Multiagent System Technologies. Vol. 3550 of Lecture Notes in Computer Science. Springer, pp.
  82--93.

\bibitem[{Rahwan and Amgoud(2006)}]{Rahwan2006}
Rahwan, I., Amgoud, L., 2006. An argumentation-based approach for practical
  reasoning. In: Maudet, N., Parsons, S., Rahwan, I. (Eds.), International Workshop on Argumentation in Multi-Agent Systems. Vol. 4766
  of Lecture Notes in Computer Science. Springer, pp. 74--90.

  
\bibitem[{Rao and Georgeff(1995)}]{Rao1995}
Rao, A.~S., Georgeff, M.~P., 1995. BDI agents: From theory to practice. In: International Conference On Multi-Agent Systems. pp. 312--319.

\bibitem[{van Riemsdijk et~al.(2002)van Riemsdijk, van~der Hoek, and
  Meyer}]{Riemsdijk2002}
van Riemsdijk, M.~B., van~der Hoek, W., Meyer, J.~C., 2002. Agent programming
  in dribble: From beliefs to goals with plans. In: Hinchey, M.~G., Rash,
  J.~L., Truszkowski, W., Rouff, C., Gordon{-}Spears, D.~F. (Eds.), Formal
  Approaches to Agent-Based Systems. Vol. 2699 of
  Lecture Notes in Computer Science. Springer, pp. 294--295.
  
\bibitem[{van Riemsdijk et~al.(2008)van Riemsdijk, Dastani, and
  Winikoff}]{Riemsdijk2008}
van Riemsdijk, M.~B., Dastani, M., Winikoff, M., 2008. Goals in agent systems:
  a unifying framework. In: Padgham, L., Parkes, D.~C., M{\"{u}}ller, J.~P.,
  Parsons, S. (Eds.), 7th International Joint Conference on Autonomous Agents
  and Multiagent Systems, 2008,
  Volume 2. {IFAAMAS}, pp. 713--720.
  
\bibitem[{van Riemsdijk et~al.(2009)van Riemsdijk, Dastani, and
  Meyer}]{Riemsdijk2009}
van Riemsdijk, M.~B., Dastani, M., Meyer, J.~C., 2009. Goals in conflict:
  semantic foundations of goals in agent programming. Autonomous Agents and
  Multi-Agent Systems 18~(3), 471--500.

\bibitem[{Sartor(1992)}]{Sartor1992}
Sartor, G., 1992. Normative conflicts in legal reasoning. Artificial
  Intelligence and Law 1~(2-3), 209--235.

\bibitem[{Savarimuthu et~al.(2013)Savarimuthu, Padget, and Purvis}]{Purvis2013}
Savarimuthu, B. T.~R., Padget, J., Purvis, M.~A., 2013. Social norm
  recommendation for virtual agent societies. In: International Conference on
  Principles and Practice of Multi-Agent Systems. Springer, pp. 308--323.
  
 \bibitem[{Shams(2016)}]{Shams2016}
Shams, Z., 2016. An Argumentation-Based Approach to
Normative Practical Reasoning. Ph.D. thesis, University of Bath.

\bibitem[{Shams et~al.(2015)Shams, Vos, Padget, and wamberto
  vasconcelos}]{Shams2015b}
Shams, Z., De Vos, M., Padget, J., Vasconcelos, W. 2015. Implementation
  of normative practical reasoning with durative actions. In: Coordination,
  Organizations, Institutions, and Norms in Agent Systems.
  {Springer}, pp. 136--154.

\bibitem[{Thangarajah et~al.(2003)Thangarajah, Padgham, and
  Winikoff}]{Thangarajah2003}
Thangarajah, J., Padgham, L., Winikoff, M., 2003. Detecting {\&} avoiding
  interference between goals in intelligent agents. In: Gottlob, G., Walsh, T.
  (Eds.), Proceedings of the Eighteenth International Joint
  Conference on Artificial Intelligence, 2003.
  Morgan Kaufmann, pp. 721--726.

\bibitem[{Thangarajah et~al.(2002)Thangarajah, Winikoff, Padgham, and
  Fischer}]{Thangarajah2002}
Thangarajah, J., Winikoff, M., Padgham, L., Fischer, K., 2002. Avoiding
  resource conflicts in intelligent agents. In:  \cite{DBLP:conf/ecai/2002},
  pp. 18--22.

\bibitem[{To et~al.(2015)To, Son, and Pontelli}]{aspplanning}
To, S.~T., Son, T.~C., Pontelli, E., 2015. A generic approach to planning in
  the presence of incomplete information: Theory and implementation. Artificial
  Intelligence 227, 1--51.


\bibitem[{Toniolo(2013)}]{Toniolo2013}
Toniolo, A., 2013. Models of argument for deliberative dialogue in complex
  domains. Ph.D. thesis, University of Aberdeen.



\bibitem[{Vasconcelos et~al.(2009)Vasconcelos, Kollingbaum, and
  Norman}]{Norman2009}
Vasconcelos, W.~W., Kollingbaum, M.~J., Norman, T.~J., 2009. Normative conflict
  resolution in multi-agent systems. Autonomous Agents and Multi-Agent Systems 19~(2), 124--152.

\end{thebibliography}


\end{document}